\def\eqref#1{equation~\ref{#1}}
\def\1{\bm{1}}
\DeclareMathAlphabet{\mathsfit}{\encodingdefault}{\sfdefault}{m}{sl}
\SetMathAlphabet{\mathsfit}{bold}{\encodingdefault}{\sfdefault}{bx}{n}
\newcommand{\E}{\mathbb{E}}
\newcommand{\R}{\mathbb{R}}
\DeclareMathOperator*{\argmax}{arg\,max}
\newcommand{\M}{\mathcal{M}}
\newcommand{\U}{\mathcal{U}}
\newcommand{\D}{\mathcal{D}}
\newcommand*{\N}{\mathbb{N}}
\newcommand*{\ens}[1]{\left\{#1\right\}}%
\newcommand*{\enstq}[2]{\left\{#1\,:\,#2\right\}}%
\newcommand\norm[1]{\left\lVert#1\right\rVert}
\newcommand*{\abs}[1]{\left\lvert #1 \right\rvert}
\crefname{assumption}{assumption}{assumptions}
\newtheorem{lem}{Lemma}
\newtheorem{definition}{Definition}
\newtheorem{thm}{Theorem}
\newtheorem{assumption}{Assumption}
\DeclareMathOperator{\Diam}{Diam}
\theoremstyle{theorem}
\newenvironment{restate-thm}[1]
{\innercustomthm}
{\endinnercustomthm}
\newenvironment{restate-lem}[1]
{\innercustomlemma}
{\endinnercustomlemma}
\newenvironment{restate-proposition}[1]
{\innercustomproposition}
{\endinnercustomproposition}
\theoremstyle{definition}
\newenvironment{restate-definition}[1]
{\innercustomdef}
{\endinnercustomdef}
\title{Statistical Guarantees for Offline Domain Randomization}
\author{%
  Arnaud Fickinger\thanks{Equal  contribution.} \kern.2em \textsuperscript{1} \\
  UC Berkeley \\
  \And
  Abderrahim Bendahi\footnotemark[1]\kern.4em\thanks{Work done during internship at UC Berkeley.} \kern.2em \textsuperscript{2}\\
  École Polytechnique \\
  \And
  Stuart Russell\textsuperscript{3} \\
  UC Berkeley \\
}
\begin{document}

\maketitle

\begingroup
\renewcommand{\thefootnote}{\arabic{footnote}}
\footnotetext[1]{\texttt{arnaud.fickinger@berkeley.edu}}
\footnotetext[2]{\texttt{abderrahim.bendahi@polytechnique.edu}}
\footnotetext[3]{\texttt{russell@berkeley.edu}}
\endgroup

\begin{abstract}
Reinforcement-learning (RL) agents often struggle when deployed from simulation to the real-world. A dominant strategy for reducing the sim-to-real gap is domain randomization (DR) which trains the policy across many simulators produced by sampling dynamics parameters, but standard DR ignores offline data already available from the real system. We study offline domain randomization (ODR), which first fits a distribution over simulator parameters to an offline dataset. While a growing body of empirical work reports substantial gains with algorithms such as DROPO \citep{tiboni2023droposimtorealtransferoffline}, the theoretical foundations of ODR remain largely unexplored. In this work, we cast ODR as a maximum-likelihood estimation over a parametric simulator family and provide statistical guarantees: under mild regularity and identifiability conditions, the estimator is weakly consistent (it converges in probability to the true dynamics as data grows), and it becomes strongly consistent (i.e., it converges almost surely to the true dynamics) when an additional uniform Lipschitz continuity assumption holds. We examine the practicality of these assumptions and outline relaxations that justify ODR’s applicability across a broader range of settings. Taken together, our results place ODR on a principled footing and clarify when offline data can soundly guide the choice of a randomization distribution for downstream offline RL.
\end{abstract}

\section{Introduction}
In recent years, RL has achieved many empirical successes, attaining human-level performance in tasks such as games \citep{DBLP:journals/corr/MnihKSGAWR13, go}, robotics \citep{DBLP:journals/corr/abs-1806-10293, DBLP:journals/corr/SchulmanLMJA15}, and recommender systems \citep{recommender1, recommender2}. Yet, RL algorithms often require vast amounts of training data to learn effective policies, which severely limits their applicability in real world settings where data collection is expensive, time‐consuming, or unsafe \citep{DBLP:journals/corr/abs-2005-01643, driving}.

\textit{Sim-to-real transfer} tackles this problem by learning in simulation and transferring the resulting policy to the real world \citep{DBLP:journals/corr/SadeghiL16, DBLP:journals/corr/abs-1804-10332, DBLP:journals/corr/abs-2009-13303}. However, although simulation provides fast and safe data collection, inevitable discrepancies between the simulated dynamics and the real world, commonly termed the \textit{sim-to-real gap}, typically induce a drop in performance upon deployment.

One of the most widely-used approaches to bridge this gap is \textit{domain randomization} (DR). Rather than training on a single fixed simulator, DR defines a family of simulators parameterized by physical factors (e.g., masses, friction coefficients, sensor noise) and at the start of each episode \textit{randomly samples} one instance from this family for training. DR has enabled zero-shot transfer in robotic control \citep{DBLP:journals/corr/TobinFRSZA17, DBLP:journals/corr/SadeghiL16}, dexterous manipulation \citep{DBLP:journals/corr/abs-1808-00177} and agile locomotion \citep{DBLP:journals/corr/abs-1710-06537}. 

Despite this empirical track record, the choice of \textit{how} to randomize is a fundamental challenge. In the original form of DR \citep{DBLP:journals/corr/TobinFRSZA17, DBLP:journals/corr/SadeghiL16}, broad \textit{uniform} ranges that look reasonable for every parameter are chosen. While recent theoretical work \citep{chen2022understandingdomainrandomizationsimtoreal} shows that such \textit{uniform DR} (UDR) can indeed bound the sim-to-real gap, the bound unfavorably scales in $O\left( N^3 \log(N) \right)$ with respect to the number of candidate simulators, in part because UDR ignores any data already available from the target system.

In contrast, \textit{Offline Domain Randomization} exploits a static dataset from the real environment before policy training to fit a sampling distribution that concentrates on plausible dynamics while remaining stochastic. Empirically, ODR variants such as DROID \citep{DROID} or DROPO \citep{tiboni2023droposimtorealtransferoffline} recover parameter distributions that explain the data and yield stronger zero-shot transfer than hand-tuned UDR. Yet, to the best of our knowledge, ODR lacks a principled foundation: we do not know (i) whether the fitted distribution converges to the real dynamics as data grows, nor (ii) how much it actually reduces the sim-to-real gap compared with UDR.

\paragraph{Our Contributions:}
\begin{itemize}
  \item \textbf{Weak consistency (Section~\ref{sec:main-results}).} We formalize ODR as maximum-likelihood estimation over a parametric simulator family and prove \emph{weak consistency}: under mild regularity, positivity, and identifiability assumptions, empirical maximizers converge in probability to the population maximizers. 
  \item \textbf{Strong consistency (Section~\ref{sec:strong-consistency}).} Adding a single \emph{uniform Lipschitz continuity} assumption on the likelihood, we upgrade convergence to \emph{strong consistency}: the ODR estimator converges almost surely to the true parameter when it is uniquely identified.
  \item \textbf{Assumptions in practice: discussion and relaxations (Section~\ref{sec:assumptions-relaxations}).} We analyze when the assumptions hold and provide drop-in relaxations and diagnostics: replacing i.i.d.\ by strict stationarity and ergodicity for the, weakening mixture positivity via a logarithmic tail condition, and giving simple sufficient conditions that imply the uniform Lipschitz requirement. 
\end{itemize}

\section{Related Works}
\paragraph{Sim-to-real transfer} The \textit{sim-to-real gap} has led to extensive research in sim-to-real transfer. Early works exploited system identification or progressive networks to adapt controllers online \citep{evolutionary-robotics, kober}, while more recent efforts have focused on purely offline training in high-fidelity simulators. Although zero‑shot transfer has been demonstrated for specific settings such as legged locomotion \citep{DBLP:journals/corr/abs-1710-06537}, dexterous manipulation \citep{chebotar, DBLP:journals/corr/abs-1808-00177} and visuomotor control \citep{Rusu2016SimtoRealRL} a noticeable performance gap persists in unstructured environments. Similar ideas have been explored in autonomous driving \citep{9025396, autonomousDR}.

\paragraph{Domain randomization} Domain randomization (DR) varies environment parameters at every training episode with the goal of producing policies that generalize across the induced simulator family. Vision-based DR first showed zero-shot transfer for quadrotor flight from purely synthetic images \citep{DBLP:journals/corr/SadeghiL16}, and dynamics randomization extended this success to legged robots and manipulation \citep{DBLP:journals/corr/abs-1808-00177}. To avoid manual tuning of randomization ranges, online methods adapt the DR distribution using real-world feedback. Ensemble-based robust optimization and Bayesian optimization techniques refine parameters via real rollouts \citep{EPOpt, muratore}, while meta RL further accelerates adaptation \citep{meta-rl, meta-rl-2019}. However, these require repeated---and potentially unsafe---hardware interactions during training.

\paragraph{Offline domain randomization} A growing line of work aims to find the best strategy to perform domain randomization from a fixed offline dataset, obviating any further real-world trials. DROID \citep{DROID} tunes simulator parameters using CMA-ES \citep{CMA-ES, CMA-ES-2} with the $L^2$ distance between a single human demonstration and its simulated counterpart as objective function. BayesSim \citep{BayesSim} trains a conditional density estimator to predict a posterior over simulator parameters given offline off-policy rollouts. Most recently, DROPO \citep{tiboni2023droposimtorealtransferoffline} introduces a likelihood-based framework that fits both the mean and covariance of a Gaussian parameter distribution by maximizing the log-likelihood of the offline data under a mixture simulator. This approach recovers rich uncertainty estimates, handles non-differentiable black-box simulators via gradient-free optimizers, and outperforms DROID, BayesSim and uniform DR in zero-shot transfer on standard benchmarks without any on-policy real-world interaction.

\paragraph{Theoretical analyses} Let $M$ be the number of candidate simulators and $H$ the horizon length. \cite{chen2022understandingdomainrandomizationsimtoreal} modeled uniform DR as a \textit{latent MDP} and proved that the performance gap between the optimal policy in the true system and the policy trained with DR scales as $O(M^3 \log(MH))$\footnote{The original paper derived a looser bound, see \Cref{app:improvement} for a tighter derivation.} in the case where the simulator class is finite and separated and $O(\sqrt{M^3 H \log(MH)})$ in the finite non-separated simulator class case. Other works have studied the information-theoretical limit of sim-to-real transfer \citep{jiang}, PAC-style guarantees via approximate simulators \citep{feng} and generalization in rich-observation MDPs \citep{zhong, krishnamurthy2016pac}. But none address the statistical benefits of offline DR. Our work bridges this gap by providing the first consistency proofs and finite-sample gap bounds for offline DR, thereby unifying empirical successes and theoretical understanding in a single framework.

\section{Problem Setup and ODR Formulation}\label{sec:setting}

\paragraph{Episodic MDPs}
We consider the episodic RL setting where each MDP corresponds to $\mathcal{M} = (\mathcal{S}, \mathcal{A}, P_{\mathcal M}, R, H, s_1)$. $\mathcal{S}$ is the set of states, $\mathcal{A}$ is the set of actions, $P_\mathcal{M}\colon \mathcal{S} \times \mathcal{A} \xrightarrow{} \Delta(\mathcal{A})$ is the transition probability matrix, $R \colon \mathcal{S} \times \mathcal{A} \xrightarrow{} [0, 1]$ is the reward function, $H$ is the number of steps of each episode, and $s_1$ is the initial state at step $h = 1$; we assume w.l.o.g.~that the agent starts from the same state in each episode.

At step $h \in [H]$, the agent observes the current state $s_h \in \mathcal{S}$, takes action $a_h \in \mathcal{A}$, receives reward $R(s_h, a_h)$, and moves to state $s_{h+1}$ with probability $P_{\mathcal M}( s_{h + 1} \mid s_h, a_h)$. The episode ends when state $s_{H+1}$ is reached.

A policy \(\pi\) is a sequence \(\{\pi_h\}_{h=1}^H\) where each \(\pi_h\) maps histories \(\mathrm{traj}_h=\{(s_1,a_1,\dots,s_h)\}\) to action distributions. Denote by \(\Pi\) the set of all such history‐dependent policies. We denote by $V_{\mathcal{M}, h}^{\pi} \colon \mathcal{S} \xrightarrow{} \mathbb{R}$ the value function at step $h$ under policy $\pi$ on MDP $\mathcal{M}$, i.e., \; \(V_{\mathcal{M}, h}^{\pi}(s) := \mathbb{E}_{\mathcal{M}, \pi} \left[ \sum_{t=h}^{H} R(s_t, a_t) \,\middle|\, s_h = s \right]\footnote{Since the policy $\pi$ is allowed to be non Markovian, this quantity can be defined using the history $H_h = \ens{s_1, \dots, s_h}$ as follows: $\mathbb{E}_{\mathcal{M}, \pi} \left[ \sum_{t=h}^{H} R(s_t, a_t) \,\middle|\, s_h = s \right] = \E_{H_h \mid s_h = s} \mathbb{E}_{\mathcal{M}, \pi} \left[ \sum_{t=h}^{H} R(s_t, a_t) \,\middle|\, H_h \right].$ }.\) We use $\pi_{\mathcal{M}}^{\star}$ to denote the optimal policy for the MDP $\mathcal{M}$, and $V^{\star}_{\mathcal{M}, h}$ to denote the optimal value under the optimal policy at step $h$.

We fix a \emph{simulator class}
\(\mathcal U=\{\mathcal M_\xi:\xi\in\Xi\subset\mathbb R^{d}\}\)
of candidate MDPs that share $(\mathcal S,\mathcal A,R,H,s_1)$
but can differ in $P_{\mathcal M}$ via the physical parameter
vector~\(\xi\).  The unknown \emph{real-world} environment is
\(\mathcal M^\star=\mathcal M_{\xi^\star}\in\mathcal U\).  We assume full
observability and that the learner can interact freely with any
\(\mathcal M\in\mathcal U\) in simulation, but never observes
\(\xi^\star\) directly.

\paragraph{Sim-to-real Transfer Problem}
Given access to the simulators in $\mathcal U$, the goal is to
output a policy $\pi$ that attains high return when
executed in the real-world MDP $\mathcal M^\star$.
We quantify performance via the
\emph{sim-to-real gap} which is defined as the difference between the value of the learned policy $\pi$ during the simulation phase (or training phase), and the value of an optimal policy for the real world, i.e. 
\[\mathrm{Gap}(\pi) := V_{\mathcal{M}^\star, 1}^{\star}(s_1) -  V_{\mathcal{M}^\star, 1}^{\pi}(s_1).\]

\paragraph{Domain Randomization}

Domain randomization specifies a prior distribution
\(\nu\) over parameters \(\Xi\) and thus over \(\mathcal U\).
Sampling $\xi\sim\nu$ at the start of every episode
induces a \emph{latent MDP} (LMDP) whose optimal Bayes policy is
\[
  \pi_{\mathrm{DR}}^{\star}:=
  \arg\max_{\pi\in\Pi}\;
  \mathbb E_{\xi\sim\nu}\!\bigl[V_{\mathcal M_\xi,1}^{\pi}(s_1)\bigr].
\]
In practice we approximate $\pi_{\mathrm{DR}}^{\star}$ with any RL
algorithm that trains in the simulator while resampling
$\xi\!\sim\!\nu$ each episode.

\paragraph{Offline Domain Randomization}

ODR assumes an offline data set
\(\mathcal D=\{(s_i,a_i,s'_i)\}_{i=1}^{N}\)
of i.i.d.\ transitions collected in the real system
\(\mathcal M^\star\) under some unknown behavior policy.
The aim is to estimate a distribution \(p^\star\) over
\(\Xi\) that explains the data and can later be used for
policy training.
We restrict \(p_\phi(\xi)=\mathcal N(\mu,\Sigma)\)\footnote{The Gaussian parameterization over $\xi$ is only a modeling choice, not a mathematical necessity. Any other parametric family for $P_\phi$ that satisfies our upcoming assumptions could be substituted without changing the arguments.}
and learn $\phi$ by maximum likelihood:

\begin{align}
    p^\star(\xi) &= \argmax_{p_\phi(\xi)} \prod_{(s_t, a_t, s_{t+1}) \in \mathcal{D}}  \E_{\xi \sim p_\phi(\xi)} \left[ P_\xi (s_{t+1} \mid s_t, a_t) \right] \\
    &= \argmax_{p_\phi(\xi)} \sum_{(s_t, a_t, s_{t+1}) \in \mathcal{D}} \log\left[ \E_{\xi \sim p_\phi(\xi)} \left[ P_\xi (s_{t+1} \mid s_t, a_t) \right] \label{eq:formal-problem} \right].
\end{align}
We justify that this formulation is well-motivated in w\Cref{app:insights-odr}.

Finally, we train a policy with the learned distribution:
\[
  \pi_{\mathrm{ODR}}^{\star}:= \argmax_{\pi\in\Pi} \mathbb E_{\xi\sim p^\star} \bigl[V_{\mathcal M_\xi,1}^{\pi}(s_1)\bigr],
\]
expecting $\pi_{\mathrm{ODR}}^{\star}$ to transfer with lower gap
thanks to the data-informed parameter distribution.

A conceptual comparison between Uniform Domain Randomization and Offline Domain Randomization is illustrated in \Cref{fig:udr_vs_odr}.
\begin{figure}
    \centering
    \includegraphics[width=\linewidth]{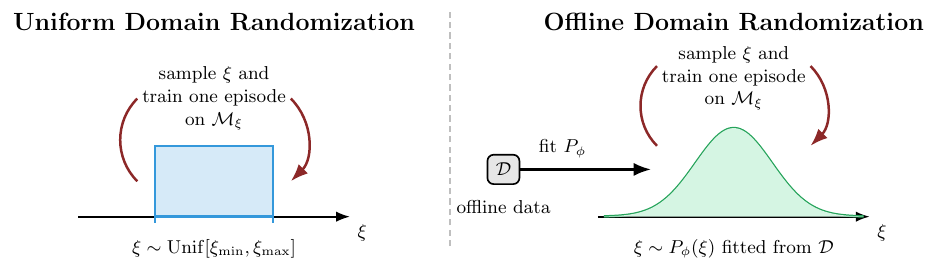}
    \caption{Conceptual comparison between Uniform Domain Randomization (left) and Offline Domain Randomization (right).}
    \label{fig:udr_vs_odr}
\end{figure}

\section{Weak Consistency of the ODR Estimator}\label{sec:main-results}

\subsection{Technical Assumptions}\label{sec:assumptions-weak}
Before stating the theoretical guarantees for ODR, we introduce some mild assumptions of regularity and identifiability that will be useful for our proofs.

The following assumption assures that $P_\xi$ is regular in the following sense.
\begin{assumption}[Simulator Regularity]\label{A.2}
    There exists a $\sigma$-finite measure $\lambda$ on $\mathcal S$
    and a constant $K<\infty$ such that for all $\xi\in\Xi$ and
    $(s,a,s')$
    \begin{equation}
        P_\xi(ds'\mid s,a) = p_\xi(s'\mid s,a)\,\lambda(ds'),\quad 0 \leq p_\xi(s'\mid s,a)\leq K,
    \end{equation}
    and $\xi\mapsto p_\xi(s'\mid s,a)$ is continuous.
\end{assumption}

Notice that when $\mathcal S$ is finite, and $\lambda$ is the counting measure on $\mathcal S$, then the first assumption clearly holds with $K = 1$ because $p_\xi(s' \mid s, a) = P_\xi(\ens{s'} \mid s, a) \leq 1$. In this case, it suffices for the mass probability to depend continuously on $\xi$ in order to verify \Cref{A.2}. Another case where this continuity holds is the Gaussian case $p_\xi(s' \!\! \mid \!\! s, a) = \mathcal{N}(s'; A(\xi) s + B(\xi) a, C(\xi))$, where $A(\xi), B(\xi), C(\xi)$ are matrices that vary continuously in $\xi$.

\begin{assumption}[Parameter-Space Compactness]\label{A.3}
    We fit $\phi=(\mu,\Sigma)$ in $\Phi=\{\mu\in \Tilde \Xi : 0 \preceq \Sigma \preceq \sigma_{\max} I\}$ where $\Tilde \Xi$ is compact, hence $\Phi$ is compact in the product topology.
\end{assumption}

This is a natural assumption, since in practice one always has prior bounds on each physical parameter, yielding a known compact search region.

Furthermore, we assume that all the transitions that appear in our dataset correspond to positive mixture probability. More formally,

\begin{assumption}[Mixture Positivity]\label{A.4}
    There exists some constant $c > 0$ such that the induced kernel
    \begin{align}
        q_\phi(s'\mid s,a) := \E_{\xi \sim P_\phi(\xi)}\left[ p_\xi(s' \mid s, a) \right] = \int p_\xi(s'\mid s,a)\,P_\phi(d\xi),
    \end{align} 
    satisfies $q_\phi(s'\mid s,a) \geq c > 0$ for every  $(s,a,s')\in \D$ and every $\phi\in\Phi$.
\end{assumption}

This guarantees that every transition in the dataset lies within the support of the simulator under the learned domain randomization distribution, so the log-likelihood is always well defined. 


Furthermore, we assume that the only mixture distribution which exactly recovers the true transition kernel is the degenerate distribution concentrated at the true parameters $\xi^\star$.

\begin{assumption}[Identifiability]\label{A.5}
    Let $\mu$ be the dataset's distribution. If for $\mu$-almost every $(s, a)$ $ q_\phi(\cdot\mid s,a) = p_{\xi^\star}( \cdot\mid s,a)$, then $\phi=(\xi^\star, 0)$.
\end{assumption}

\subsection{Notation for ODR}
Throughout this work, we use a capital letter, $P$, to denote a probability distribution, and the corresponding lowercase letter, $p$, to denote its probability density (or mass) function.

We define the empirical and population log-likelihoods by
\begin{align}
    L_N(\phi)&:=\frac{1}{N}\sum_{i=1}^N a\bigl(X_i, \phi\bigr), \quad L(\phi) := \mathbb{E}_{X\sim P_{\xi^\star}}\bigl[a(X, \phi)\bigr],
\end{align}
where $X_i = (s_i, a_i, s'_i)$ is the $i$-th transition in $\D$, and $X = (s, a, s')$ is a generic transition. The function $a$ is defined by 
\begin{equation}
    a(x, \phi) := \log q_\phi(s' \mid s, a) = \log \int_\xi p_\xi(s' \mid s, a) p_\phi(\xi) d\xi.
\end{equation}

\subsection{Main Theorem}

The first lemma proves the uniqueness of the maximizer of the population log-likelihood $L$. A detailed proof of this lemma can be found in \Cref{app:proofs}.

\begin{lem}[Uniqueness of the Population Maximizer]\label{lemma:unique-maximizer}
    Under assumptions \ref{A.2}, \ref{A.4} and \ref{A.5}, the population log-likelihood 
    \[ L(\phi) =\E_{(s,a,s')\sim P_{\xi^\star}}\bigl[\log q_\phi(s'\mid s,a)\bigr] \]
    where ${\displaystyle q_\phi(s'\mid s,a)=\int P_\xi(s'\mid s,a)\,P_\phi(d\xi)}$, has the unique maximizer $\phi^\star=(\mu^\star,\Sigma^\star)=\bigl(\xi^\star,\,0\bigr)$.
\end{lem}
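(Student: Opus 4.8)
The plan is to rewrite the difference $L(\phi^*)-L(\phi)$ as an averaged Kullback--Leibler divergence and then finish with \Cref{A.5}. Before that I would record two preliminaries. First, $\phi^*=(\xi^*,0)\in\Phi$ (zero covariance is admissible, and $\xi^*\in\Xi\subseteq\widetilde\Xi$), and the degenerate Gaussian $P_{\phi^*}=\delta_{\xi^*}$ gives $q_{\phi^*}(\cdot\mid s,a)=p_{\xi^*}(\cdot\mid s,a)$. Second, $L$ is everywhere finite: \Cref{A.2} gives $q_\phi(s'\mid s,a)=\E_\xi[p_\xi(s'\mid s,a)]\le M$ while \Cref{A.4} gives $q_\phi(s'\mid s,a)\ge c>0$, so $\log q_\phi$ is bounded in $[\log c,\log M]$ and $L(\phi)$ is well defined for every $\phi\in\Phi$ (in particular $\log p_{\xi^*}=\log q_{\phi^*}$ is bounded).

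Next, writing $\rho$ for the marginal law of $(s,a)$ under the offline data-generating process, I would condition on $(s,a)$ inside the expectation and use $q_{\phi^*}=p_{\xi^*}$ to get
\[
  L(\phi^*)-L(\phi)
  =\E_{(s,a)\sim\rho}\!\left[\E_{s'\sim p_{\xi^*}(\cdot\mid s,a)}\!\left[\log\frac{p_{\xi^*}(s'\mid s,a)}{q_\phi(s'\mid s,a)}\right]\right]
  =\E_{(s,a)\sim\rho}\!\Big[\mathrm{KL}\big(p_{\xi^*}(\cdot\mid s,a)\,\big\|\,q_\phi(\cdot\mid s,a)\big)\Big].
\]
The conditional splitting is legitimate since both $\log p_{\xi^*}$ and $\log q_\phi$ are bounded, so no term is $\pm\infty$. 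Nonnegativity of relative entropy makes every inner term $\ge 0$, hence $L(\phi^*)\ge L(\phi)$ for all $\phi\in\Phi$, i.e.\ $\phi^*$ is a maximizer of $L$.

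For uniqueness, suppose $\phi$ is also a maximizer. Then $\E_{(s,a)\sim\rho}[\mathrm{KL}(p_{\xi^*}(\cdot\mid s,a)\,\|\,q_\phi(\cdot\mid s,a))]=0$, and since the integrand is nonnegative it must vanish for $\rho$-almost every $(s,a)$; by the equality case of Gibbs' inequality this forces $q_\phi(\cdot\mid s,a)=p_{\xi^*}(\cdot\mid s,a)$ ($\lambda$-a.e.) for $\rho$-a.e.\ $(s,a)$. Invoking \Cref{A.5} then yields $\phi=(\xi^*,0)=\phi^*$, which is the claim.

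\textbf{Main obstacle.}
I expect the only delicate part to be measure-theoretic bookkeeping: making the KL decomposition airtight (finiteness and the conditional split, both handled by the two-sided bound $c\le p_\xi,q_\phi\le M$) and, more importantly, bridging the ``$\rho$-a.e.\ $(s,a)$'' conclusion of Gibbs' inequality to the ``for all $(s,a)$'' quantifier demanded by \Cref{A.5}. The clean resolution is to require the offline behavior distribution to have full support on $\mathcal S\times\mathcal A$ (or to read \Cref{A.5} on $\mathrm{supp}(\rho)$); with that in place the rest is just nonnegativity of relative entropy combined with identifiability.
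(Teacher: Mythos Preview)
Your proposal is correct and follows essentially the same route as the paper: both arguments express $L(\phi^*)-L(\phi)$ as an expected conditional KL divergence, invoke nonnegativity and the equality case of Gibbs' inequality, and then apply \Cref{A.5}. The measure-theoretic gap you flag---passing from ``$\rho$-a.e.\ $(s,a)$'' to the ``for all $(s,a)$'' hypothesis of \Cref{A.5}---is also left implicit in the paper's own proof, so your treatment is at least as careful as the original.
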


We now state our first consistency result for ODR.

\begin{thm}[Weak Consistency of ODR]\label{thm:consistency}
    Under Assumptions ~\ref{A.2}, \ref{A.3}, \ref{A.4} and \ref{A.5}, any measurable maximizer 
            \(\displaystyle \widehat{\phi}_N\in\argmax_{\phi\in\Phi}L_N(\phi)\) 
            satisfies $\widehat{\phi}_N \;\xrightarrow[N\xrightarrow{} \infty]{P}\;\phi^\star$.
\end{thm}

\Cref{thm:consistency} guarantees that with a sufficiently large offline dataset, ODR recovers a distribution arbitrarily close to the true parameter $\xi^\star$.

The following lemma is particularly strong: it establishes uniform convergence in probability of $L_N$.
\begin{lem}\label{lemma:L continu}
    The function $\phi \mapsto L(\phi)$ is uniformly continuous on $\Phi$, and furthermore
    \begin{equation}
        \sup_{\phi \in \Phi} \left|L_N(\phi) -L(\phi) \right| \xrightarrow[N \to \infty]{P} 0.
    \end{equation}
\end{lem}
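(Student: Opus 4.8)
The plan is to establish this as a uniform law of large numbers via the standard Glivenko–Cantelli / empirical-process argument over a compact parameter space. I would proceed in three stages: (1) show the envelope function is integrable, (2) show $\phi \mapsto a(x,\phi)$ is continuous for each fixed $x$, and (3) invoke a uniform LLN (Wald-type / Jennrich's theorem) to get both uniform continuity of $L$ and the uniform convergence in probability.

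First I would bound the integrand. By Assumption \ref{A.4}, $q_\phi(s'\mid s,a) \geq c > 0$ for every $(s,a,s') \in \mathcal{D}$ and every $\phi \in \Phi$, so $a(x,\phi) = \log q_\phi(s'\mid s,a) \leq \log M$ by Assumption \ref{A.2} (since $q_\phi$ is an average of densities each bounded by $M$), and $a(x,\phi) \geq \log c$. Hence $|a(x,\phi)| \leq \max(|\log c|, |\log M|) =: K < \infty$ uniformly in $x$ and $\phi$. This uniform boundedness is stronger than mere integrability of an envelope and makes the dominated-convergence and LLN steps routine; in particular $L(\phi)$ is finite for all $\phi$.

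Second, I would verify that $\phi \mapsto a(x,\phi)$ is continuous on the compact set $\Phi$. Fix $x = (s,a,s')$. The map $\phi \mapsto q_\phi(s'\mid s,a) = \int p_\xi(s'\mid s,a)\, p_\phi(\xi)\, d\xi$ is continuous in $\phi = (\mu,\Sigma)$: the Gaussian density $p_\phi(\xi)$ depends continuously on $(\mu,\Sigma)$ for $\Sigma \succ 0$, and as $\Sigma \to 0$ the measure $P_\phi$ converges weakly to the Dirac mass at $\mu$, against which the bounded continuous (by Assumption \ref{A.2}) integrand $\xi \mapsto p_\xi(s'\mid s,a)$ integrates continuously — so the limit extends continuously to the boundary $\Sigma = 0$. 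Since $q_\phi \geq c > 0$, composing with $\log$ preserves continuity, giving continuity of $a(x,\cdot)$ on $\Phi$. Combined with the uniform bound $|a(x,\phi)| \leq K$, dominated convergence yields that $L(\phi) = \E_{X \sim P_{\xi^*}}[a(X,\phi)]$ is continuous on $\Phi$; compactness of $\Phi$ (Assumption \ref{A.3}) upgrades this to \emph{uniform} continuity.

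Finally, for the uniform convergence $\sup_{\phi}|L_N(\phi) - L(\phi)| \xrightarrow{P} 0$, I would apply the uniform law of large numbers for i.i.d.\ data indexed by a compact parameter set: the family $\{a(\cdot,\phi) : \phi \in \Phi\}$ has continuous sample paths in $\phi$, a uniform (hence integrable) envelope $K$, and $\Phi$ is compact, so by the classical argument — cover $\Phi$ by finitely many small balls using compactness, control the oscillation of $a(x,\cdot)$ within each ball via a modulus-of-continuity / bracketing bound, apply the pointwise strong LLN at the finitely many centers, and take $N \to \infty$ then shrink the mesh — one obtains $\sup_{\phi \in \Phi}|L_N(\phi) - L(\phi)| \to 0$ in probability (indeed a.s.). I expect the only delicate point to be the continuity of $q_\phi$ at the boundary $\Sigma = 0$, i.e.\ making the weak-convergence-to-Dirac argument rigorous and uniform enough to feed into the oscillation control; everything else is a routine application of a standard uniform LLN under a bounded, continuous-in-parameter integrand.
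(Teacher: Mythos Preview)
Your proposal is correct and follows essentially the same route as the paper: bound $|a(x,\phi)|$ by $\max\{|\log c|,|\log M|\}$ via Assumptions~\ref{A.2} and~\ref{A.4}, establish continuity of $\phi\mapsto a(x,\phi)$ through weak convergence of $P_\phi$ (the paper does this via characteristic functions and L\'evy's continuity theorem, which cleanly handles the boundary $\Sigma=0$ you flag as delicate), and then invoke a standard uniform LLN over a compact parameter space with a bounded envelope (the paper cites Newey--McFadden / Tauchen rather than sketching the covering argument directly). Uniform continuity then follows from Heine--Cantor, exactly as you say.
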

The proof of this lemma relies on a \textit{uniform law of large numbers} (ULLN) -in particular the ULLN for \textit{Glivenko-Cantelli} classes from \cite{NEWEY19942111}- and is deferred to \Cref{app:proofs}. In contrast, the ordinary law of large numbers only guarantees that for each \textit{fixed} $\phi$ one has $L_N(\phi) \to L(\phi)$ in probability, i.e., $\left| L_N(\phi) - L(\phi) \right| \to 0$ for that particular $\phi$. This pointwise convergence does \textit{not} imply that $\sup_{\phi \in \Phi} \left|L_N(\phi) -L(\phi) \right| \to 0$ , which is exactly what the ULLN provides. Uniform convergence over all $\phi \in \Phi$ is crucial to control the behavior of the empirical maximizers and hence to establish the consistency of our estimator.

The following lemma formalizes a uniform separation property: any parameter $\phi$ lying outside an $\epsilon$-neighborhood of the true maximizer $\phi^\star$ must have its population log-likelihood at least $\eta > 0$ below $L(\phi^\star)$.
\begin{lem}\label{lemma:infimum-g}
    Let $\phi^\star$ be the unique maximizer of $L$. We have 
    \begin{equation}
        \forall \epsilon > 0, \exists \eta(\epsilon) > 0, \forall \phi \in \Phi, \norm{\phi^\star - \phi} \geq \epsilon \implies L(\phi^\star) - L(\phi) \geq \eta(\epsilon) > 0.
    \end{equation}
\end{lem}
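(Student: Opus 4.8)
The plan is a standard compactness-plus-continuity argument, exploiting that $\Phi$ is compact (\Cref{A.3}), that $L$ is continuous (in fact uniformly continuous by \Cref{lemma:L continu}), and that $\phi^*$ is the \emph{unique} global maximizer of $L$ on $\Phi$ (\Cref{lemma:unique-maximizer}). Fix $\epsilon>0$ and consider the ``excluded'' set
\[
  K_\epsilon := \{\phi\in\Phi : \norm{\phi^*-\phi}\ge \epsilon\}.
\]
First I would observe that $K_\epsilon$ is closed, being the intersection of the closed set $\Phi$ with the closed set $\{\phi : \norm{\phi^*-\phi}\ge\epsilon\}$; since it is a closed subset of the compact set $\Phi$, it is itself compact. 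If $K_\epsilon=\emptyset$ (which can happen when $\Diam(\Phi)<\epsilon$), the implication in the statement is vacuous and any choice $\eta(\epsilon)>0$ works, so we may assume $K_\epsilon\neq\emptyset$.

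Next, since $L$ is continuous on the nonempty compact set $K_\epsilon$, it attains its supremum there: there exists $\phi_\epsilon\in K_\epsilon$ with $L(\phi_\epsilon)=\max_{\phi\in K_\epsilon}L(\phi)$. Because $\norm{\phi^*-\phi^*}=0<\epsilon$, we have $\phi^*\notin K_\epsilon$, so in particular $\phi_\epsilon\neq\phi^*$. Uniqueness of the maximizer then forces the strict inequality $L(\phi_\epsilon)<L(\phi^*)$. I would therefore set
\[
  \eta(\epsilon) := L(\phi^*)-L(\phi_\epsilon)>0,
\]
which depends only on $\epsilon$ (and on the fixed problem data). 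For any $\phi\in\Phi$ with $\norm{\phi^*-\phi}\ge\epsilon$ we have $\phi\in K_\epsilon$, hence $L(\phi)\le L(\phi_\epsilon)=L(\phi^*)-\eta(\epsilon)$, i.e.\ $L(\phi^*)-L(\phi)\ge\eta(\epsilon)>0$, which is exactly the claim.

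I do not expect any real obstacle here: the only points requiring care are (i) verifying that $K_\epsilon$ is genuinely compact, which reduces to the closedness of a sublevel/superlevel set of the norm intersected with compact $\Phi$, and (ii) handling the degenerate case $K_\epsilon=\emptyset$ separately so that the ``$\exists\,\eta(\epsilon)>0$'' clause is not asserted about an empty maximization. Everything else follows from the extreme value theorem and the uniqueness already established in \Cref{lemma:unique-maximizer}; note that mere continuity of $L$ suffices, so the uniform continuity from \Cref{lemma:L continu} is not even needed for this particular lemma.
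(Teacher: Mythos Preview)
Your proof is correct and essentially identical to the paper's: both define the closed (hence compact) set $\{\phi\in\Phi:\|\phi-\phi^*\|\ge\epsilon\}$, apply the extreme value theorem to the continuous function $L$ (equivalently to $g(\phi)=L(\phi^*)-L(\phi)$) on this set, and use uniqueness of the maximizer to conclude strict positivity of the resulting gap. Your explicit handling of the case $K_\epsilon=\emptyset$ is a small refinement the paper omits, but otherwise the arguments coincide.
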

The proof of \Cref{lemma:infimum-g} is deferred to \Cref{app:proofs}.

\begin{proof}[Proof of~\Cref{thm:consistency}]
    We consider a sequence of measurable maximizers $\widehat{\phi}_N \in \argmax_{\phi\in\Phi} L_N(\phi)$.
    Let $\epsilon > 0$ be a fixed positive real number. Our goal is to prove that 
    \begin{equation}
        P\left( \norm{\widehat{\phi}_N - \phi^\star} \geq \epsilon \right) \xrightarrow[N \to \infty]{} 0.
    \end{equation}
    Using Lemma \ref{lemma:infimum-g}, we conclude that there exists some $\eta > 0$ such that
    $\forall \phi \in \Phi$ if $\norm{\phi^\star - \phi} \geq \epsilon$ then $L(\phi^\star) - L(\phi) \geq \eta > 0. \label{ineq:infimum}$
    Now, let $E_{\eta}$ be the event 
    \begin{equation}
    E_\eta = \{\sup_{\phi \in \Phi} \left| L_N(\phi) - L(\phi) \right| < \eta / 3\} 
    \end{equation} then under $E_\eta$, if $\norm{\phi^\star - \phi} \geq \epsilon$ we have 
        \begin{align}
            L_N(\phi^\star) =  L_N(\phi^\star) - L(\phi^\star) + L(\phi^\star)
            \geq - \left| L_N(\phi^\star) - L(\phi^\star) \right|   + L(\phi^\star)  \geq - \eta / 3 +  L(\phi^\star),
        \end{align} 
        since under $E_\eta$, $- \left| L_N(\phi) - L(\phi) \right| \geq -\eta / 3$, similarly 
        \begin{equation}
            L(\phi^\star) \geq  L(\phi) + \eta =  L(\phi) - L_N(\phi) + L_N(\phi) + \eta \geq- \left| L_N(\phi) - L(\phi) \right| + L_N(\phi) + \eta.
        \end{equation} 
    and combining these two inequalities gives
    \(L_N(\phi^\star) \geq L_N(\phi) + \eta / 3.\)
    This proves that, under $E_\eta$, $\hat \phi_N \in \mathrm{B}(\phi^\star, \epsilon) := \enstq{\phi \in \Phi }{ \norm{\phi - \phi^\star} < \epsilon}$ thus
    \( \{\|\widehat{\phi}_N - \phi^\star\| \geq \epsilon \} \subset E_\eta^c,\)
    which yields 
    \begin{equation}
        P(\|\widehat{\phi}_N - \phi^\star\| \geq \epsilon) \leq P\Big ( \sup_{\phi \in \Phi} \left| L_N(\phi) - L(\phi) \right| \geq \eta / 3 \Big) \xrightarrow[n \to \infty]{\text{By \Cref{lemma:L continu}}} 0.
    \end{equation}  
\end{proof}

The result is a \emph{weak} consistency statement ($\widehat{\phi}_N\to\phi^\star$ in probability). In \Cref{sec:strong-consistency} we strengthen this to almost-sure convergence by adding a Lipschitz regularity assumption.

\section{Strong Consistency under Uniform Lipschitz Conditions}\label{sec:strong-consistency}

While \Cref{thm:consistency} guarantees that the ODR estimate converges in probability to the true parameter distribution, in many practical settings one desires a stronger, almost sure guarantee.  Intuitively, \emph{strong consistency} asserts that, with probability one, the estimated distribution will converge exactly to the true one as more offline data is observed.  In this section we show that, under an additional Lipschitz continuity assumption on the log‐likelihood function, ODR enjoys this almost-sure convergence property.

\subsection{Addiotional Assumption}

The key extra ingredient is a uniform control over how rapidly the single step log-likelihood 
\(  a\bigl(x,\phi\bigr) \)
can change as we vary the distributional parameter $\phi = (\mu,\Sigma)$. Formally:
\begin{assumption}[Uniform Lipschitz Continuity]\label{A.6}
  There exists a constant $L<\infty$ such that for every transition $x=(s,a,s')$ and all $\phi,\psi\in\Phi$, we have
  \(
    \bigl|a(x,\phi)-a(x,\psi)\bigr|
    \le
    L\,\bigl\|\phi-\psi\bigr\|_2.
  \)
\end{assumption}


This condition ensures that the family $\enstq{a(\cdot,\phi)}{\phi\in\Phi}$ is \emph{equi-Lipschitz}, which -together with compactness of $\Phi$- yields a \textit{uniform strong law of large numbers}. In turn, this uniform convergence of the empirical log-likelihood to its population counterpart underpins the almost sure convergence of the maximizers.

\subsection{Main Result}
We can now state our strong consistency result:
\begin{thm}[Strong Consistency of ODR]\label{thm:strong-consistency}
  Under \Cref{A.2,A.3,A.4,A.5,A.6}, let \(
    \widehat{\phi}_N
    \in
    \argmax_{\phi\in\Phi}L_N(\phi)
  \)
  be any measurable maximizer of the empirical log-likelihood, then
  \begin{equation}
          \widehat{\phi}_N
    \xrightarrow[N\to\infty]{\mathrm{a.s.}}\;
    \phi^\star
    =
    (\xi^\star,0),
  \end{equation}

  i.e., almost surely the estimated distribution collapses exactly onto the true simulator parameters.
\end{thm}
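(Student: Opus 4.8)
\textbf{Proof plan for \Cref{thm:strong-consistency}.} The plan is to upgrade the two ingredients behind the weak-consistency argument — the uniform convergence of $L_N$ to $L$ and the uniform separation of $L$ around $\phi^*$ — to their almost-sure forms, and then replay the comparison argument of \Cref{thm:consistency} on a single probability-one event. The separation statement \Cref{lemma:infimum-g} is deterministic and carries over unchanged, and uniqueness of $\phi^*=(\xi^*,0)$ is \Cref{lemma:unique-maximizer}; so the only genuinely new work is to strengthen \Cref{lemma:L continu} into a \emph{uniform strong law of large numbers},
\[
  \sup_{\phi\in\Phi}\bigl|L_N(\phi)-L(\phi)\bigr|\xrightarrow[N\to\infty]{\mathrm{a.s.}}0,
\]
and this is exactly where \Cref{A.6} is used.

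First I would note that $a(\cdot,\phi)$ is uniformly bounded: \Cref{A.4} gives $q_\phi(s'\mid s,a)\ge c>0$ and \Cref{A.2} gives $q_\phi(s'\mid s,a)\le M$, hence $\log c\le a(x,\phi)\le\log M$ for all $x$ and all $\phi\in\Phi$. In particular $a(X,\phi)$ is integrable under $P_{\xi^*}$, so the ordinary SLLN applies pointwise: $L_N(\phi)\to L(\phi)$ almost surely for each fixed $\phi$. Then I would run a finite-net/equicontinuity argument: fix $\epsilon>0$, set $\delta=\epsilon/(3L)$ with $L$ the Lipschitz constant of \Cref{A.6}, and use compactness of $\Phi$ (\Cref{A.3}) to extract a finite $\delta$-net $\{\phi_1,\dots,\phi_K\}$. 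On the probability-one event on which $L_N(\phi_k)\to L(\phi_k)$ for all $k$ simultaneously (a finite intersection of a.s. events), $\max_k|L_N(\phi_k)-L(\phi_k)|<\epsilon/3$ for $N$ large. For an arbitrary $\phi\in\Phi$, choosing $\phi_k$ with $\|\phi-\phi_k\|_2<\delta$, \Cref{A.6} gives both $|L_N(\phi)-L_N(\phi_k)|\le L\|\phi-\phi_k\|_2<\epsilon/3$ and, after taking expectations, $|L(\phi)-L(\phi_k)|\le L\|\phi-\phi_k\|_2<\epsilon/3$, so the triangle inequality yields $\sup_{\phi\in\Phi}|L_N(\phi)-L(\phi)|<\epsilon$ eventually, almost surely; intersecting over a sequence $\epsilon_m\downarrow 0$ produces a single probability-one event $\Omega_0$ on which the displayed uniform SLLN holds. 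Equivalently, one may invoke an off-the-shelf strong uniform law for equi-Lipschitz — hence Glivenko-Cantelli — classes, in the spirit of the reference already cited for \Cref{lemma:L continu}.

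Finally I would conclude exactly as in the proof of \Cref{thm:consistency}, but pointwise in $\omega$: fix $\omega\in\Omega_0$ and $\epsilon>0$, let $\eta=\eta(\epsilon)>0$ be the constant from \Cref{lemma:infimum-g}, and pick $N_0$ with $\sup_{\phi}|L_N(\phi)(\omega)-L(\phi)|<\eta/3$ for all $N\ge N_0$. The same two-sided chain of inequalities used in \Cref{thm:consistency} then forces $L_N(\phi^*)(\omega)\ge L_N(\phi)(\omega)+\eta/3$ for every $\phi$ with $\|\phi-\phi^*\|_2\ge\epsilon$, so $\widehat{\phi}_N(\omega)$, being a maximizer of $L_N(\cdot)(\omega)$, must satisfy $\|\widehat{\phi}_N(\omega)-\phi^*\|_2<\epsilon$ for all $N\ge N_0$. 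Since $\epsilon>0$ was arbitrary, $\widehat{\phi}_N(\omega)\to\phi^*$; and since $P(\Omega_0)=1$, this gives $\widehat{\phi}_N\to\phi^*$ almost surely. The main obstacle is the uniform strong law: the pointwise SLLN plus a union bound is insufficient because $\Phi$ is uncountable, so the equi-Lipschitz plus finite-net reduction (or an appropriate black-box strong ULLN) is the crux; everything else is a faithful rerun of the weak-consistency proof.
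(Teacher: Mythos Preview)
Your proposal is correct and follows the same overall architecture as the paper: first establish the uniform strong law $\sup_{\phi\in\Phi}|L_N(\phi)-L(\phi)|\to 0$ a.s.\ via a finite $\epsilon/L$-net on the compact $\Phi$ combined with the equi-Lipschitz control of \Cref{A.6}, and then replay the $\eta/3$ comparison argument of \Cref{thm:consistency} pathwise on the resulting probability-one event.

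The only noteworthy difference is in how you close the uniform SLLN at the finitely many net points. The paper applies Hoeffding's inequality at each $\phi_k$ (using the uniform bound $|a(x,\phi)|\le\widetilde M$), union-bounds over the net to get $P(\sup_\phi|L_N-L|>2\epsilon)\le 2K\exp(-cN\epsilon^2)$, and then invokes Borel--Cantelli on this summable sequence. You instead apply the ordinary SLLN pointwise at each $\phi_k$, take the finite intersection of a.s.\ events, and then intersect over a countable sequence $\epsilon_m\downarrow 0$. Your route is slightly more elementary (no concentration inequality needed), while the paper's route yields an explicit exponential tail bound as a byproduct; both are standard and either is perfectly adequate here.
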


The distinction between convergence \emph{in probability} and \emph{almost surely} is subtle but meaningful: almost-sure consistency implies that, except on a set of histories of measure zero, as soon as enough data is collected the optimizer will \emph{never} stray from the true maximum again. In contrast, convergence in probability only assures that large deviations become increasingly unlikely.

The heart of the proof is the following uniform strong law, which follows from empirical process arguments once we have the Lipschitz control:

\begin{lem}[Uniform Strong Law]\label{lem:cv-as-D}
  Under \Cref{A.2,A.3,A.4,A.5,A.6}, the empirical and population log-likelihoods satisfy \(
    \sup_{\phi\in\Phi}\bigl|L_N(\phi)-L(\phi)\bigr|
    \;\xrightarrow[N\to\infty]{\mathrm{a.s.}}\;0.
  \)
\end{lem}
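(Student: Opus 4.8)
The plan is to establish the uniform strong law of large numbers for the class $\mathcal{F} = \{a(\cdot,\phi) : \phi \in \Phi\}$ by verifying that it is a \emph{Glivenko--Cantelli} class, and the key structural fact we exploit is that $\mathcal{F}$ is \emph{equi-Lipschitz} in $\phi$ over the compact index set $\Phi$. First I would record that $\mathcal{F}$ admits a uniform envelope: under \Cref{A.4} and \Cref{A.2}, for every transition $x=(s,a,s')\in\mathcal{D}$ and every $\phi\in\Phi$ we have $\log c \le a(x,\phi) = \log q_\phi(s'\mid s,a) \le \log M$, so $|a(x,\phi)|\le \max(|\log c|,|\log M|) =: K < \infty$, a constant (in particular integrable) envelope. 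Combined with \Cref{A.6}, which gives $|a(x,\phi)-a(x,\psi)|\le L\|\phi-\psi\|_2$ for all $x$ uniformly, the class $\mathcal{F}$ is uniformly bounded and equi-Lipschitz.

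Next I would invoke the standard bracketing argument. Fix $\varepsilon>0$. By compactness of $\Phi$ there is a finite $\varepsilon/(2L)$-net $\{\phi_1,\dots,\phi_m\}\subset\Phi$; the Lipschitz bound then shows that the brackets $[a(\cdot,\phi_j)-\varepsilon/2,\ a(\cdot,\phi_j)+\varepsilon/2]$ cover $\mathcal{F}$, and each bracket has $L^1(P_{\xi^*})$-width $\varepsilon$. Hence the bracketing number $N_{[\,]}(\varepsilon,\mathcal{F},L^1(P_{\xi^*}))$ is finite for every $\varepsilon>0$, which by the Glivenko--Cantelli theorem (e.g.\ the ULLN cited from \cite{NEWEY19942111}, or van der Vaart--Wellner) yields $\sup_{\phi\in\Phi}|L_N(\phi)-L(\phi)| \to 0$ almost surely. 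Alternatively, and perhaps more transparently, I would argue directly: the ordinary strong law gives $L_N(\phi_j)\to L(\phi_j)$ a.s.\ for each of the finitely many net points $\phi_j$ simultaneously, and $L$ is $L$-Lipschitz in $\phi$ (it is an expectation of $L$-Lipschitz functions), so for any $\phi$ with $\|\phi-\phi_j\|_2\le\varepsilon/(2L)$ we get $|L_N(\phi)-L(\phi)| \le |L_N(\phi)-L_N(\phi_j)| + |L_N(\phi_j)-L(\phi_j)| + |L(\phi_j)-L(\phi)| \le \varepsilon + |L_N(\phi_j)-L(\phi_j)|$; taking $\sup_\phi$ and then $\limsup_N$ over the finite maximum of the net errors gives $\limsup_N \sup_\phi |L_N(\phi)-L(\phi)| \le \varepsilon$ a.s.\ for every $\varepsilon>0$, hence the claim.

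The main obstacle, and the only place real care is needed, is making the two Lipschitz facts rigorous and uniform: (i) the single-step Lipschitz bound is assumed outright in \Cref{A.6}, so that is free; but (ii) transferring it to the population functional $L(\phi)=\mathbb{E}_{X\sim P_{\xi^*}}[a(X,\phi)]$ requires a dominated-convergence / Fubini-type justification that the $L$-Lipschitz bound passes through the expectation, which is immediate because the bound $|a(x,\phi)-a(x,\psi)|\le L\|\phi-\psi\|_2$ holds pointwise in $x$ with a constant independent of $x$. A secondary subtlety is ensuring the finitely many strong laws $L_N(\phi_j)\to L(\phi_j)$ hold on a common probability-one event — but a finite intersection of probability-one events has probability one, so this is routine. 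Once these are in place the $\varepsilon$-net argument closes the proof, with no appeal to differentiability or to the sufficient condition discussed after \Cref{A.6}.
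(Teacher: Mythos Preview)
Your proposal is correct and structurally mirrors the paper's own argument: both cover the compact $\Phi$ by a finite net, use the Lipschitz bound from \Cref{A.6} to control the oscillation between any $\phi$ and its nearest net point, and reduce the uniform deviation to a finite maximum over the net. The difference lies in how almost-sure convergence at the net points is obtained. You invoke the ordinary strong law of large numbers at each $\phi_j$ and intersect the finitely many (and then countably many, over $\varepsilon=1/n$) probability-one events; the paper instead applies Hoeffding's inequality at each $\phi_j$ (using the envelope $|a(x,\phi)|\le\tilde M$ that you also record), union-bounds over the $N_\varepsilon$ net points, and shows the resulting tail probabilities are summable in $N$, so that Borel--Cantelli delivers the almost-sure statement. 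Your route is more elementary and needs only the envelope for integrability, whereas the paper's route is quantitative: it produces an explicit bound $P\bigl(\sup_\phi|L_N(\phi)-L(\phi)|\ge3\varepsilon\bigr)\le 2N_\varepsilon\exp(-N\varepsilon^2/2\tilde M^2)$ together with a covering-number estimate $N_\varepsilon\le(4\Diam(\Phi)L/\varepsilon)^d$, which could be leveraged for finite-sample guarantees even though the lemma as stated does not require them. The bracketing/Glivenko--Cantelli phrasing you offer first is also valid and amounts to the same covering argument packaged abstractly.
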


Lemma~\ref{lem:cv-as-D} tells us that with probability one the worst-case difference between the finite-sample objective and its ideal limit vanishes. Once this uniform convergence is in hand, classical arguments on continuity and compactness show that the maximizers converge almost surely.

\begin{proof}[Proof (Sketch)]
We first show $\sup_{\phi\in\Phi}|L_N(\phi)-L(\phi)| \xrightarrow[N \to \infty]{\mathrm{a.s.}} 0$ by verifying for each $\epsilon>0$ that \(
\sum_{N} P(\sup_{\phi\in\Phi}|L_N(\phi)-L(\phi)|>2\epsilon)<\infty.\) By compactness of $\Phi$ there is a finite $\epsilon/L$-net $\{\phi_1,\dots,\phi_K\}$ so that Lipschitz continuity gives $|L_N(\phi)-L_N(\phi_i)|+|L(\phi)-L(\phi_i)|\le\epsilon$ whenever $\|\phi-\phi_i\|\le\epsilon/L$. Hence
\begin{equation}
    \Big\{\sup_{\phi}|L_N(\phi)-L(\phi)|>2\epsilon\Big\} \subset \bigcup_{i=1}^K \, \ens{|L_N(\phi_i)-L(\phi_i)|>\epsilon},
\end{equation} and \emph{Hoeffding's inequality} yields
\begin{equation}
    P(|L_N(\phi_i)-L(\phi_i)|>\epsilon)\le2\exp \left(-\frac{N\epsilon^2}{2 \widetilde{M}^2} \right),
\end{equation} where $\widetilde{M} := \max\ens{\abs{\log K}, \abs{\log c}}$. So $P(\sup_{\phi}|L_N(\phi)-L(\phi)|>2\epsilon)\le2K\exp(-cN\epsilon^2)$, which is summable in $N$. \textit{Borel-Cantelli lemma} then gives uniform almost sure convergence. Finally, on the event of uniform convergence one repeats the identification neighborhood argument of~\Cref{thm:consistency} to conclude $\widehat{\phi}_N\to\phi^\star$ almost surely.
\end{proof}

Full details of the proof are deferred to~\Cref{app:proofs-strong}, but the key takeaway is that the Lipschitz assumption upgrades our earlier \emph{in probability} consistency to the far stronger \emph{almost sure} statement, giving robust guarantees for ODR even in worst case data realizations.

\subsection{A Notion of $\alpha$-informativeness}
The strong consistency yields the following.

\begin{lem}\label{lem:informativeness-gaussian}
    Let $\epsilon > 0$. If $\widehat{\phi}_N = (\mu_N, \Sigma_N) \xrightarrow{\rm a.s.}(\xi^\star,0)$ then almost surely there is $N_0$ so that for all $N\ge N_0$, \( P_{\widehat{\phi}_N}\bigl(\mathrm{B}(\xi^\star,\epsilon)\bigr) > \tfrac12. \)
\end{lem}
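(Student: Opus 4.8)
The plan is to show that the probability mass a Gaussian $\mathcal N(\mu_N,\Sigma_N)$ places on the ball $\mathrm B(\xi^*,\epsilon)$ can be made arbitrarily close to $1$ once $\mu_N$ is close to $\xi^*$ and $\Sigma_N$ is close to $0$, and then invoke the almost sure convergence $\widehat\phi_N\to(\xi^*,0)$ to say that this closeness holds eventually. Concretely, I would fix a realization in the probability-one event on which $\mu_N\to\xi^*$ and $\Sigma_N\to 0$ (operator norm), and argue pathwise.

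The first step is a quantitative concentration bound for a Gaussian. If $\xi\sim\mathcal N(\mu_N,\Sigma_N)$, then $\xi - \xi^* = (\mu_N-\xi^*) + (\xi-\mu_N)$, so whenever $\|\mu_N-\xi^*\|\le\epsilon/2$ we have
\begin{equation}
    P_{\widehat\phi_N}\bigl(\mathrm B(\xi^*,\epsilon)^c\bigr)\le P_{\xi\sim\mathcal N(\mu_N,\Sigma_N)}\bigl(\|\xi-\mu_N\|>\epsilon/2\bigr).
\end{equation}
The right-hand side is controlled by Markov's inequality applied to $\|\xi-\mu_N\|_2^2$: its expectation is $\operatorname{tr}(\Sigma_N)\le d\,\|\Sigma_N\|_{\mathrm{op}}$, so
\begin{equation}
    P_{\xi\sim\mathcal N(\mu_N,\Sigma_N)}\bigl(\|\xi-\mu_N\|>\epsilon/2\bigr)\le\frac{4\,d\,\|\Sigma_N\|_{\mathrm{op}}}{\epsilon^2}.
\end{equation}
(If one wants a cleaner statement one can instead use a Gaussian tail bound, but Markov suffices and avoids dimensional subtleties in the sub-Gaussian norm.)

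The second step is to combine the two convergences. Since $\mu_N\to\xi^*$, there is $N_1$ with $\|\mu_N-\xi^*\|\le\epsilon/2$ for all $N\ge N_1$; since $\|\Sigma_N\|_{\mathrm{op}}\to 0$, there is $N_2$ with $\|\Sigma_N\|_{\mathrm{op}}< \epsilon^2/(8d)$ for all $N\ge N_2$. Taking $N_0=\max(N_1,N_2)$, for all $N\ge N_0$ the displayed bounds give $P_{\widehat\phi_N}(\mathrm B(\xi^*,\epsilon)^c)\le 1/2$, hence $P_{\widehat\phi_N}(\mathrm B(\xi^*,\epsilon))\ge 1/2$; a strict inequality follows by shrinking the threshold on $\|\Sigma_N\|_{\mathrm{op}}$ slightly (e.g. require $<\epsilon^2/(16d)$). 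Since the event on which both convergences hold has probability one, the random index $N_0$ is almost surely finite, which is exactly the claim.

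I do not anticipate a serious obstacle here; this is essentially a routine $\varepsilon$--$\delta$ argument combining a Gaussian moment bound with the given almost sure convergence. The only mild care point is that $\widehat\phi_N\to(\xi^*,0)$ must be read in a topology on $\Sigma$ strong enough to force $\operatorname{tr}(\Sigma_N)\to 0$ — the product topology on $\Phi$ from \Cref{A.3} does exactly this since $\Sigma\mapsto\operatorname{tr}\Sigma$ is continuous and $0\preceq\Sigma_N\preceq\sigma_{\max}I$ — so one should state explicitly that convergence of $\Sigma_N$ to the zero matrix entails $\operatorname{tr}(\Sigma_N)\to 0$. One should also note the degenerate boundary case $\Sigma_N=0$, where $P_{\widehat\phi_N}=\delta_{\mu_N}$ and the bound is trivially satisfied, which is already covered by the inequality $\operatorname{tr}(\Sigma_N)=0$.
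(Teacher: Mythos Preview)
Your proof is correct and follows essentially the same route as the paper: split $\|\xi-\xi^*\|$ via the triangle inequality into a centering term $\|\xi-\mu_N\|$ controlled by Chebyshev/Markov through $\operatorname{tr}(\Sigma_N)$, and a bias term $\|\mu_N-\xi^*\|$ handled by the almost sure convergence, then conclude that the mass of $\mathrm B(\xi^*,\epsilon)$ tends to $1$. The only cosmetic differences are that the paper keeps $\operatorname{tr}(\Sigma_N)$ rather than passing to $d\,\|\Sigma_N\|_{\mathrm{op}}$, and phrases the bias term as a probability rather than fixing a realization pathwise as you do; your pathwise treatment is arguably cleaner since $\mu_N$ is deterministic once the sample $\omega$ is fixed.
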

\begin{proof}[Proof of \Cref{lem:informativeness-gaussian}]
Fix $\epsilon>0$ and let $Z_N \sim\mathcal N(\mu_N,\Sigma_N)$.  Then \(
P\bigl(\|Z_N -\xi^\star\|\ge\epsilon\bigr)
\le
P\bigl(\|Z_N -\mu_N \|\ge\tfrac\epsilon2\bigr)
+
P\bigl(\|\mu_N -\xi^\star\|\ge\tfrac\epsilon2\bigr).
\)
By \textit{Chebyshev's inequality}, \(
P\bigl(\|Z_N -\mu_N \|\ge\tfrac\epsilon2\bigr)
\le
\frac{\E\|Z_N-\mu_N\|^2}{(\epsilon/2)^2}
=\frac{\tr(\Sigma_N)}{(\epsilon/2)^2}.
\)
Hence \(P_{\hat \phi_N}\bigl(\mathrm{B}(\xi^\star,\epsilon)\bigr)
=1-P\bigl(\|Z_N-\xi^\star\|\ge\epsilon\bigr)
\ge 1-\frac{4\tr(\Sigma_N)}{\epsilon^2}-P(\|\mu_N -\xi^\star\|\ge\epsilon/2).\)
As $(\mu_N,\Sigma_N)\to(\xi^\star,0)$ a.s., we have $\|\mu_N-\xi^\star\|\to0$ and $\tr(\Sigma_N)\to0$, so the right hand side tends to $1$ almost surely. Hence $P_{\hat \phi_N}\bigl(\mathrm{B}(\xi^\star,\epsilon)\bigr) \to 1$ almost surely.
\end{proof}

The lemma states that when the estimator $(\mu_N,\Sigma_N)$ converges almost surely to the true mean with vanishing covariance, the Gaussian distribution fitted by ODR eventually assigns {\em more than half of its probability mass} to any fixed $\epsilon$–ball around $\xi^\star$.  In other words, ODR is so informative that the learned randomization concentrates near the real  world. This observation motivates a general, model-agnostic notion of \say{informativeness} for ODR, applicable beyond the Gaussian setting.

\begin{definition}[$\alpha, \epsilon$-Informativeness of an ODR Algorithm $\mathcal{A}$]
\label{def:alpha-informative}
Let $\alpha\in(0,1)$ and $\epsilon > 0$, an algorithm $\mathcal{A}$ is {\normalfont $\alpha, \epsilon$-informative} if there exists almost surely $N_0 \ge 1$ such that for all $N \ge N_0$, running $\mathcal{A}$ on any collection $\mathcal{D} = \{(s_i, a_i, s'_i)\}_{i=1}^N$ of i.i.d. transitions (from the real system) produces an ODR distribution $\widehat{\phi}_{N}$ such that
\[
P_{\widehat\phi_{N}}\bigl(\mathrm{B}(\xi^\star,\epsilon)\bigr) \ge \alpha.
\]
We say algorithm $\mathcal{A}$ is {\normalfont $\alpha$-informative} if $\mathcal{A}$ is $\alpha,\epsilon$-informative for any $\epsilon > 0$.
\end{definition}

 Under this language, Lemma \ref{lem:informativeness-gaussian} states that the Gaussian ODR procedure from \Cref{sec:setting} is \(\alpha\)-informative for
every \(\alpha<1\). When the simulator class \(\Xi\) is finite, \(\alpha\)-informativeness is equivalent to the almost-sure existence of an index \(N_0\) such that, for
all \(N\ge N_0\), the fitted distribution assigns at least \(\alpha\) mass to the singleton \(\{\xi^\star\}\), that is,
\(P_{\widehat\phi_N}(\xi^\star)\ge\alpha\).

\section{Assumptions: Practicality, Violations, and Relaxations}\label{sec:assumptions-relaxations}

\subsection{The i.i.d. Assumption} 
The i.i.d. assumption on the offline dataset $\D$ holds whenever the offline dataset is collected using a \emph{fixed}, \emph{stationary} behavior policy $\pi( \cdot \mid s)$. This assumption is stronger than needed for our weak consistency result: we invoke it only to apply a uniform law of large numbers at the end of the proof of \Cref{lemma:L continu}. As noted after Lemma $2.4$ in \cite{NEWEY19942111}, the same conclusion holds (even for dependent data) for ergodic and strictly stationary sequences $\ens{X_i = (s_i, a_i, s_i')}$ which means that the joint distribution of the vector $(X_i, \dots, X_{i+m})$ does not depend on $i$ for any $m$. This is much weaker than the i.i.d. assumption and is satisfied whenever the offline dataset is collected by a \emph{fixed} behavior policy (not necessarily a stationary policy). In practice, weak consistency should therefore hold broadly.


\subsection{The Mixture Positivity Assumption} 
\Cref{A.4} is a strong requirement: it holds if and only if \(\inf_{x}\inf_{\phi} q_{\phi}(x)>0\), i.e., the density is uniformly bounded away from zero over both \(x\) and \(\phi\). This excludes common light-tailed families (e.g., Gaussian-like), for which \(\inf_{x} q_{\phi}(x)=0\). For \emph{weak consistency}, however, \Cref{A.4} can be relaxed:

\begin{lem}[Relaxation of \Cref{A.4}]
\label{lem:relaxation-ass-3}
Weak consistency of ODR still holds if \Cref{A.4} is replaced by the following tail condition: there exists \( \epsilon_0 > 0 \) such that for all \( \epsilon \in (0,\epsilon_0] \),
\begin{equation}
    P \left(\inf_{\phi} q_{\phi}(X) \le \epsilon \right) \le \frac{1}{\log(1/\epsilon)^{2}}.
\end{equation}
\end{lem}
This assumption is strictly weaker than uniform positivity. The key point is that, to apply the uniform law of large numbers from \cite{NEWEY19942111} in the weak-consistency proof, it suffices to have an \emph{integrable envelope} \( d(x) \) with \( a(x,\phi) \le d(x) \) for all \( \phi \), rather than a uniform bound in \((x,\phi)\), the above tail control yields such an envelope. The proof is deferred to \Cref{sec:relaxation-mixture}.

\subsection{The Uniform Lipschitz Continuity}
Assumption \Cref{A.6} is not immediately interpretable. We give a simple sufficient condition under which it holds:

\begin{lem}[Sufficient Condition for the Uniform Lipschitz Continuity Assumption]\label{lem:CS-lip}
    Suppose the following holds for every $x = (s, a, s')$
    \begin{enumerate}
        \item The function $\xi \mapsto p_\xi(s' \mid s, a)$ is twice continuously differentiable (of class $C^2$),
        \item There exists two constants $G_1 > 0$ and $G_2 > 0$ such that $\left|\nabla_\xi p_\xi(s' \mid s, a)  \right| \leq G_1$ and $|\nabla_\xi^2 p_\xi(s' \mid s, a)  | \leq G_2$ ,
    \end{enumerate}
    then \Cref{A.6} holds with $L = \dfrac{G_1 + G_2 / 2}{c}$.
\end{lem}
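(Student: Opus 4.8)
The plan is to strip the logarithm first and reduce the whole statement to a Lipschitz bound on the mixture kernel itself. Fix a transition $x=(s,a,s')$ and write $g_x(\phi):=q_\phi(s'\mid s,a)=\mathbb{E}_{\xi\sim\mathcal{N}(\mu,\Sigma)}\!\left[p_\xi(s'\mid s,a)\right]$, so that $a(x,\phi)=\log g_x(\phi)$. By \Cref{A.4} we have $g_x(\phi)\ge c>0$ for every $\phi\in\Phi$, and $t\mapsto\log t$ is $(1/c)$-Lipschitz on $[c,\infty)$; hence $\bigl|a(x,\phi)-a(x,\psi)\bigr|\le\frac1c\bigl|g_x(\phi)-g_x(\psi)\bigr|$, and it suffices to prove that $g_x$ is $(G_1+G_2/2)$-Lipschitz on $\Phi$, uniformly in $x$. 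That immediately yields \Cref{A.6} with $L=(G_1+G_2/2)/c$.

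To bound $g_x$ I would differentiate under the integral sign, which is legitimate because the $C^2$ assumption together with the uniform bounds $\|\nabla_\xi p_\xi\|\le G_1$ and $\|\nabla^2_\xi p_\xi\|\le G_2$ supplies an integrable dominating function. Two classical Gaussian-calculus identities then do the work. Bonnet's theorem (Gaussian integration by parts in the mean) gives $\nabla_\mu g_x(\phi)=\mathbb{E}_{\xi\sim\mathcal{N}(\mu,\Sigma)}[\nabla_\xi p_\xi]$, so by Jensen $\|\nabla_\mu g_x(\phi)\|\le\mathbb{E}\|\nabla_\xi p_\xi\|\le G_1$. Price's theorem, equivalently the heat-equation identity $\partial_t(G_t*f)=\tfrac12\Delta(G_t*f)$, gives for any symmetric direction $D$ that $\langle\nabla_\Sigma g_x(\phi),D\rangle=\tfrac12\operatorname{tr}\!\bigl(D\,\mathbb{E}_{\xi\sim\mathcal{N}(\mu,\Sigma)}[\nabla^2_\xi p_\xi]\bigr)$, whence $\|\nabla_\Sigma g_x(\phi)\|_F\le\tfrac12\mathbb{E}\|\nabla^2_\xi p_\xi\|_F\le G_2/2$. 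Splitting $|g_x(\mu,\Sigma)-g_x(\mu',\Sigma')|\le|g_x(\mu,\Sigma)-g_x(\mu',\Sigma)|+|g_x(\mu',\Sigma)-g_x(\mu',\Sigma')|$ and integrating the two one-variable gradient bounds along straight segments — which stay in $\Phi$ since $\{0\preceq\Sigma\preceq\sigma_{\max}I\}$ is convex and $\widetilde{\Xi}$ may be taken convex without loss — Cauchy–Schwarz gives $|g_x(\phi)-g_x(\psi)|\le G_1\|\mu-\mu'\|+\tfrac{G_2}{2}\|\Sigma-\Sigma'\|_F\le\sqrt{G_1^2+G_2^2/4}\,\|\phi-\psi\|_2\le(G_1+G_2/2)\|\phi-\psi\|_2$, as required. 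If one prefers to avoid differentiating on the degenerate boundary $\{\det\Sigma=0\}$, note that \Cref{A.2} (boundedness and continuity of $p_\xi$, together with weak convergence of Gaussians, even to Diracs) makes $g_x$ continuous on the compact set $\Phi$, so the bound proved on the open set $\{\Sigma\succ0\}$ extends to all of $\Phi$ by density.

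I expect the delicate point to be the covariance identity and its constant: one must use the symmetric-matrix derivative convention carefully so the factor is $\tfrac12$ rather than $1$, and one must read the uniform Hessian bound as a Frobenius bound ($\|\nabla^2_\xi p_\xi\|_F\le G_2$) so that Jensen delivers exactly $G_2/2$ and not a dimension-dependent $\sqrt{d}\,G_2/2$. The fact that the regime of interest is precisely $\Sigma=0$ (since $\phi^*=(\xi^*,0)$) is what makes the continuity/degeneracy remark above worth stating explicitly. Everything else — dominated convergence for the differentiation, Jensen's inequality, and the fundamental theorem of calculus along segments — is routine.
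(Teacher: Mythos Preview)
Your proposal is correct and follows essentially the same route as the paper: reduce via the $(1/c)$-Lipschitz property of $\log$ on $[c,\infty)$, compute $\nabla_\mu g_x=\mathbb{E}[\nabla_\xi p_\xi]$ and $\nabla_\Sigma g_x=\tfrac12\mathbb{E}[\nabla^2_\xi p_\xi]$ via Gaussian integration-by-parts/Stein identities (which the paper calls integration by parts and the \emph{iterated Stein formula}, and you call Bonnet's and Price's theorems), bound the gradients by $G_1$ and $G_2/2$, and extend to singular $\Sigma$ by density and continuity of $g_x$. The only cosmetic difference is that the paper applies the mean-value theorem on the full segment in $\Phi$, whereas you split into $\mu$- and $\Sigma$-segments before recombining with Cauchy--Schwarz; both yield the same constant.
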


A complete proof appears in \Cref{app:cs-lip}. This sufficient condition is easy to interpret because it depends only on the simulator's transition kernel \(p_{\xi}\). In practice, it is satisfied whenever the simulators are governed by smooth physics.

\subsection{The Identifiability Assumption}
\Cref{A.4} is a coverage condition on the dataset: it requires that any mixing Gaussian distribution that reproduces the transition kernel on the state–action pairs observed in \(\mathcal{D}\) must equal the degenerate Dirac mass at the true parameter. Intuitively, the dataset must visit state–action pairs that are informative about \(\xi\). This is information-theoretically minimal: no method can distinguish parameters that are observationally identical on \(\operatorname{supp}(\mu)\).

In the case of partial coverage, we naturally define the \emph{identified set under coverage $\mu$} as follows: 
\begin{equation}
    \mathcal{Q}_\mu^\star := \enstq{\phi \in \Phi}{q_\phi (\cdot \mid s, a) = p_{\xi^\star}(\cdot \mid s, a) \text{ for } \mu-\text{a.e. } (s, a)}.
\end{equation}

It follows from this definition and the proof of \Cref{lemma:unique-maximizer} that: 

\begin{lem}
    The following holds:
    \begin{equation}
        \mathcal{Q}_\mu^\star = \argmax_\phi L(\phi).
    \end{equation}
\end{lem}

Using this notion of identified set, we can generalize \Cref{thm:consistency} when we relax \Cref{A.5} as follows: 
\begin{thm}\label{thm:coverage-relaxed}
    Under Assumptions ~\ref{A.2}, \ref{A.3} and \ref{A.4}, the following holds, Any measurable maximizer 
            \(\displaystyle \widehat{\phi}_N\in\argmax_{\phi\in\Phi}L_N(\phi)\) 
            satisfies $\mathrm{dist} (\widehat{\phi}_N, \mathcal{Q}_\mu^\star) \xrightarrow[N \to \infty]{P} 0$ \footnote{where $\mathrm{dist}$ is the distance to a set defined by $\mathrm{dist}(\phi, \mathcal{Q}) := \inf_{\psi \in \mathcal{Q}} \norm{\phi - \psi}$.}.
\end{thm}

This theorem states that under partial coverage, our estimator does not select a single parameter but converges to an \emph{identified set} of parameters that are observationally indistinguishable on the state–action pairs visited by the data. The proof is deferred to \Cref{apx:partial-coverage}. The proof of this theorem is very general. In particular, even in the misspecified case where $\M^* \notin \U$, we still have $\widehat{\phi}_N \to \phi^\dag \in \argmax_\phi L(\phi)$. A more detailed discussion of the this case is deferred to \Cref{apx:misspecification}.

Without any additional assumptions, the only structural result that we can derive on the identified set is:

\begin{lem}[Upper Hemicontinuity of $\mathcal{Q}_\mu^\star$]\label{lem:Q-nonempty-compact}
    Under Assumptions \ref{A.2}, \ref{A.3} and \ref{A.4} The identified set $\mathcal{Q}_\mu^\star$ is non-empty and compact and and the correspondence $\mu \mapsto \mathcal{Q}^\star_\mu$ is upper hemicontinuous\footnote{A set-valued map $F$ is upper hemicontinuous at $x_0$ if, whenever $x_n\!\to x_0$ and $y_n\in F(x_n)$ with $y_n\to y$, then $y\in F(x_0)$. Equivalently: for every open $U$ with $F(x_0)\subseteq U$, there exists a neighborhood $V$ of $x_0$ such that $F(x)\subseteq U$ for all $x\in V$.} with respect to total variation.
\end{lem}

The proof of this lemma uses \emph{Berge's Maximum Theorem} and is deferred to \Cref{apx:partial-coverage}.

In short, this lemma says if we perturb the dataset’s coverage only slightly (in total-variation distance), the set of maximizers cannot “jump” to a faraway region: any limit of maximizers for the perturbed coverages remains a maximizer at the limit coverage (upper hemicontinuity). Intuitively, modestly adding or reweighting offline data will not create spurious, distant optima, it keeps the solution set nearby, and, as coverage includes more informative state–action pairs, typically makes it tighter.


 The main limitation is that, \emph{without additional assumptions}, we cannot provide a quantitative radius for this set or a Lipschitz-type bound on how much it can move when coverage changes.



\section{Conclusion}
In this paper, we present a rigorous framework for ODR, bridging the gap between empirical success and theoretical understanding in sim-to-real transfer. By casting ODR as maximum likelihood estimation over a parametric family of simulator distributions, we proved that, under mild regularity conditions, the learned distribution is weakly consistent, concentrating on the true dynamics as the offline dataset grows. With the addition of a uniform Lipschitz continuity assumption, we further established strong consistency. Beyond these core results, we scrutinized the practicality of the assumptions and provided diagnostics and relaxations—replacing i.i.d. with stationarity/ergodicity for the ULLN, weakening mixture positivity via a logarithmic tail condition, and giving checkable smoothness criteria that imply the uniform Lipschitz requirement—thereby justifying ODR’s applicability across a broader range of settings. By demonstrating that offline logs are not merely passive datasets but a powerful tool for principled domain randomization, we hope our formulation and analysis can provide insight that paves the way for safer, more data-efficient sim-to-real pipelines in robotics, autonomous vehicles, and beyond.

\newpage

\bibliography{iclr2026_conference}
\bibliographystyle{iclr2026_conference}

\newpage
\appendix

\section{Additional Preliminaries}

\subsection{Refined Analysis of the Uniform DR Sim-to-Real Gap}\label{app:improvement}
In this section, we tighten the worst-case sim-to-real gap bound in the finite, $\delta$-separable setting originally proved by \cite{chen2022understandingdomainrandomizationsimtoreal}.

In the proof of Lemma $5$ of the paper, Inequality $(47)$ yields with probability at least $1 - \delta_0$,

\begin{align}
    \sum_{s^\prime \in \mathcal{H}} \ln\left(\dfrac{P_{\mathcal{M}^\star}(s^\prime \mid s_0, a_0)}{P_{\mathcal{M}_1}(s^\prime \mid s_0, a_0)}\right) \geq \dfrac{n_0 \delta^2}{2} - \log(1 / \alpha) \sqrt{2 n_0 \log \left( 2 / \delta_0 \right)} - \sqrt{n_0 \log\left( 2 / \delta_0 \right) / c} - 2 \alpha S n_0. \label{ineq:1}
\end{align}

The objective is to find a setting of parameters that guarantee with probability at least $1 - \frac{1}{MH}$,
\[\sum_{s^\prime \in \mathcal{H}} \ln\left(\dfrac{P_{\mathcal{M}^\star}(s^\prime \mid s_0, a_0)}{P_{\mathcal{M}_1}(s^\prime \mid s_0, a_0)}\right) > 0.\]

It is sufficient to have the right term positive in \Cref{ineq:1}, i.e.,
\begin{align}
    \dfrac{n_0 \delta^2}{2} - \log(1 / \alpha) \sqrt{2 n_0 \log \left( 2 / \delta_0 \right)} - \sqrt{n_0 \log\left( 2 / \delta_0 \right) / c} - 2 \alpha S n_0 > 0 . \label{ineq:2}
\end{align}

Setting $\alpha = \frac{\delta^2}{8 S}, \delta_0 = \frac{1}{MH}$ (the same values as in the paper), this term becomes
\begin{align}
    \dfrac{n_0 \delta^2}{2} - & \log(1 / \alpha) \sqrt{2 n_0 \log \left( 2 / \delta_0 \right)} - \sqrt{n_0 \log\left( 2 / \delta_0 \right) / c} - 2 \alpha S n_0 \\
    & = \dfrac{n_0 \delta^2}{2} - \log(\frac{8 S}{\delta^2}) \sqrt{2 n_0 \log \left( 2 MH \right)} - \sqrt{n_0 \log\left( 2 MH\right) / c} - \frac{\delta^2}{4} n_0 \\
    & = \frac{n_0 \delta^2}{4} - \log(\frac{8 S}{\delta^2}) \sqrt{2 n_0 \log \left( 2 MH \right)} - \sqrt{n_0 \log\left( 2 MH\right) / c} \\
    & = \sqrt{n_0} \frac{\delta^2}{4} \left[\sqrt{n_0} - \frac{4}{\delta^2} \left(\log(\frac{8 S}{\delta^2}) \sqrt{2 \log \left( 2 MH \right)} - \sqrt{\log\left( 2 MH\right) / c} \right) \right] 
\end{align}

hence the condition \ref{ineq:2} becomes equivalent to 
\begin{equation}
    \sqrt{n_0} > \frac{4}{\delta^2}\sqrt{\log(2 M H)} \left( \sqrt{2}\log\left(\frac{8S}{\delta^2}\right) + \dfrac{1}{\sqrt{c}}\right),
\end{equation} 

or, equivalently, 
\begin{equation}
    n_0 > \frac{16}{\delta^4} \log\left(2MH\right) \left( \sqrt{2}\log\left(\frac{8S}{\delta^2}\right) + \dfrac{1}{\sqrt{c}}\right)^2.
\end{equation}

Thus, there exists a valid setting that satisfies condition \ref{ineq:2} which can be expressed as 
\begin{equation}
    \alpha = \dfrac{\delta ^2}{8S}, \quad  \delta_0 = \dfrac{1}{MH}, \quad n_0 = \dfrac{c_0 \log(MH) \log^{2}(S / \delta^2)}{\delta^4},
\end{equation}

for some constant $c_0 > 0$ sufficiently large.

With this new setting, the result of the Lemma $7$ of the paper becomes 
\begin{equation}
    \mathbb{E}[h_0] \leq O\left(\dfrac{D M^2 \log(MH) \log^{2}(S / \delta^2)}{\delta^4}\right).
\end{equation}

The proof of Theorem $5$ of the paper is not affected by the new expression of $n_0$ and gives 
\begin{equation}
    V_{\mathcal{M^\star}, 1}^\star(s_1) - V_{\mathcal{M^\star}, 1}^{\hat{\pi}}(s_1) \leq O(\mathbb{E}[h_0] + D) = O\left(\dfrac{D M^2 \log(MH) \log^{2}(S / \delta^2)}{\delta^4}\right).
\end{equation}

Combining this result with Lemma 1 of the paper leads to

\begin{equation}
    \mathrm{Gap}(\pi_{DR}^\star,\mathcal{U}) =  O\left(\dfrac{D M^3 \log(MH) \log^{2}(S / \delta^2)}{\delta^4}\right).
\end{equation}

This shows that in the regime where $H$ and $M$ are relatively large, the $O(M^3 \log^3(MH))$ bound of \cite{chen2022understandingdomainrandomizationsimtoreal} can be tightened to $O(M^3 \log(MH))$.

\subsection{Insights into the ODR Objective}\label{app:insights-odr}
In this section, we explain why the formal ODR problem in Equation~\ref{eq:formal-problem} corresponds exactly to fitting the simulator parameter distribution that maximizes the likelihood of our offline dataset.

We seek the parameter $\phi$ of the distribution $P_\phi(\xi)$ that maximizes the probability of observing the triples $(s_i, a_i, s'_i)$ of our dataset, i.e., to solve
\begin{equation}
    \phi^\star = \argmax_{\phi} P\left(\ens{(s_i, a_i, s'_i)}_{i = 1}^N \mid \phi \right),
\end{equation}

This probability corresponds to $P\left(\cap_{i = 1}^N \ens{(s_i, a_i, s'_i)} \mid \phi \right)$ and since the data is i.i.d.,  $\phi^\star$ can be rewritten as follows 
\begin{equation}
    \phi^\star = \argmax_{\phi} \prod_{i = 1}^N P(\ens{(s_i, a_i, s'_i)} \mid \phi).
\end{equation}

Now, we can approximate $P(\ens{(s_i, a_i, s'_i)} \mid \phi)$ by the expected transition probability over all $\xi \sim P_\phi(\xi)$, i.e.,
\begin{equation}
    \phi^\star \approx \argmax_{\phi} \prod_{i = 1}^N \E_{\xi \sim P_\phi(\xi)} \left[ P_\xi(s'_i \mid s_i, a_i)\right].
\end{equation} 

Since the logarithm is increasing, this is equivalent to
\begin{equation}
    \phi^\star \approx \argmax_{\phi} \sum_{i = 1}^N \log \E_{\xi \sim P_\phi(\xi)} \left[ P_\xi(s'_i \mid s_i, a_i)\right],
\end{equation}

which recovers exactly the empirical log‐likelihood objective stated in
Equation~\ref{eq:formal-problem}.

\section{Omitted Proofs in \Cref{sec:main-results}}\label{app:proofs}

\begin{proof}[Proof of \Cref{lemma:unique-maximizer}]
    We have 
    \begin{equation}
        L(\phi) = \E_{(s,a)} \E_{s' \sim p_{\xi^\star}(. \mid s, a)} \Big[ \log \E_{\xi \sim p_\phi(\xi)} [ p_\xi(s' \mid s, a) ] \Big].
    \end{equation}

    We rewrite the inner expectation as follows
    \begin{align}
        \E&_{s' \sim p_{\xi^\star}(. \mid s, a)}  \Big[ \log q_\phi(s' \mid s, a) \Big]= \E_{s' \sim p_{\xi^\star}(. \mid s, a)} \left[ \log \dfrac{q_\phi(s' \mid s, a)}{p_{\xi^\star}(s' \mid s, a)} + \log p_{\xi^\star}(s' \mid s, a) \right].
    \end{align}
    
    Notice that 
    \begin{align}
        \int_{s' \in \mathcal S} q_\phi(s' \mid s, a) \lambda(ds') &= \int_{s' \in \mathcal S} \int_{\xi \in \Xi} p_\xi(s' \mid s, a) p_\phi(d\xi) \lambda(ds'),
    \end{align}
    and using \textit{Fubini-Tonelli's theorem}, it follows, 
    \begin{equation}
        \int_{s' \in \mathcal S} \int_{\xi \in \Xi} p_\xi(s' \mid s, a) p_\phi(d\xi) \lambda(ds')= \int_{\xi \in \Xi} p_\phi(d\xi) \int_{s' \in \mathcal S} p_\xi(s' \mid s, a) \lambda(ds').
    \end{equation}
     Since $p_\xi(\cdot \mid s,a)$ and $p_\phi$ are probability densities, their total mass is $1$, which yields
    \begin{align}
         \int_{s' \in \mathcal S} q_\phi(s' \mid s, a) \lambda(ds') = \int_{\xi \in \Xi} p_\phi(d\xi) = 1. \label{eq:densities}
    \end{align}
    
    Hence $q_\phi(. \mid s, a)$ is a probability density, and one can rewrite $L(\phi)$ using \textit{Kullback-Leibler (KL) divergence} (defined in \cite{10.1214/aoms/1177729694}) as follows

    \begin{align}
        L(\phi) &= \E_{(s,a)} \left[ - D_{KL} \left(p_{\xi^\star}(. \mid s, a) \| q_\phi(. \mid s, a)\right) +\E_{s' \sim p_{\xi^\star}(. \mid s, a)} \left[ \log p_{\xi^\star}(s' \mid s, a) \right] \right] \\
        &= \E_{(s,a)} \left[ - D_{KL} \left(p_{\xi^\star}(. \mid s, a) \| q_\phi(. \mid s, a)\right) \right] + H(\xi^\star),
    \end{align}

    where $H(\xi^\star) = \E_{(s,a)}\E_{s' \sim p_{\xi^\star}(. \mid s, a)} \left[ \log p_{\xi^\star}(s' \mid s, a) \right]$ is independent of $\phi$, and for a fixed $(s, a)$, $D_{KL} \left(p_{\xi^\star}(. \mid s, a) \| q_\phi(. \mid s, a)\right) \geq 0$ with equality if and only if $p_{\xi^\star}(. \mid s, a) = q_\phi(. \mid s, a)$.

    Hence, for all $\phi \in \Phi,  L(\phi) \leq H(\xi^\star)$, and 

    \begin{align}
        L(\phi) = H(\xi^\star) &\iff \E_{(s,a)} \left[ - D_{KL} \left(p_{\xi^\star}(. \mid s, a) \| q_\phi(. \mid s, a)\right) \right] = 0\\
        &\iff \text{For almost every } (s,a), D_{KL} \left(p_{\xi^\star}(. \mid s, a) \| q_\phi(. \mid s, a)\right) = 0 \\
        &\iff \text{For almost every } (s,a), p_{\xi^\star}(. \mid s, a) = q_\phi(. \mid s, a) \\
        &\iff \phi = (\xi^\star, 0), 
    \end{align}

    where the last equivalence follows from \Cref{A.5}. This concludes the proof.
\end{proof}

\begin{proof}[Proof of \Cref{lemma:L continu}]

We begin by stating and proving a few intermediate lemmas that will simplify the proof.

The following lemma states that convergence of $\phi$ implies convergence in distribution of $P_\phi$.
\begin{lem}\label{lemma:cvd}
    Let $\ens{\phi_n} := \ens{(\mu_n, \Sigma_n)} \in \Phi^\N$ a sequence that converges to $\phi := (\mu, \Sigma)$ (i.e. $\norm{\mu_n - \mu} \xrightarrow{} 0$ and $\norm{\Sigma_n - \Sigma}_{\text{op}}\xrightarrow{} 0$). Then $P_{\phi_n}$ converges weakly to $P_\phi$ ($P_{\phi_n} \implies P_\phi$).
\end{lem}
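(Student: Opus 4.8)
The plan is to reduce weak convergence of Gaussians to pointwise convergence of their characteristic functions and then invoke L\'evy's continuity theorem. Recall that for any $\mu\in\mathbb R^d$ and any positive semidefinite $\Sigma$, the (possibly degenerate) Gaussian $\mathcal N(\mu,\Sigma)$ has characteristic function
\[
  \varphi_{\mu,\Sigma}(t)=\exp\!\Bigl(i\langle t,\mu\rangle-\tfrac12\,t^\top\Sigma\,t\Bigr),\qquad t\in\mathbb R^d,
\]
a formula that does \emph{not} require $\Sigma$ to be invertible. This flexibility is essential here, because the limiting covariance may be singular --- in particular the case $\phi=(\xi^*,0)$, where $P_\phi=\delta_{\xi^*}$, is exactly the situation we ultimately care about, so any argument based on convergence of Lebesgue densities is unavailable.

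First I would fix $t\in\mathbb R^d$ and note that $(\mu,\Sigma)\mapsto i\langle t,\mu\rangle-\tfrac12 t^\top\Sigma t$ is continuous (affine in $\mu$, linear in $\Sigma$); composing with the continuous map $\exp\colon\mathbb C\to\mathbb C$ and using $\mu_n\to\mu$, $\Sigma_n\to\Sigma$ gives $\varphi_{\mu_n,\Sigma_n}(t)\to\varphi_{\mu,\Sigma}(t)$ for every $t$. Since the pointwise limit $\varphi_{\mu,\Sigma}$ is itself the characteristic function of the probability measure $P_\phi$, it is in particular continuous at $0$, so L\'evy's continuity theorem applies and yields $P_{\phi_n}\Longrightarrow P_\phi$.

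As an alternative that sidesteps L\'evy's theorem, I would use a common coupling: fix $G\sim\mathcal N(0,I_d)$ on one probability space, let $\Sigma_n^{1/2},\Sigma^{1/2}$ denote principal (PSD) square roots, and observe $\mu_n+\Sigma_n^{1/2}G\sim P_{\phi_n}$ while $\mu+\Sigma^{1/2}G\sim P_\phi$. The only nontrivial ingredient is continuity of $A\mapsto A^{1/2}$ on the cone of positive semidefinite matrices; granting it, $\Sigma_n^{1/2}\to\Sigma^{1/2}$, hence $\mu_n+\Sigma_n^{1/2}G\to\mu+\Sigma^{1/2}G$ almost surely, and a.s.\ convergence implies convergence in distribution. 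Either way, the main (and essentially only) obstacle is the possible degeneracy of $\Sigma$: both routes are chosen precisely to be insensitive to it --- trivially so for the characteristic-function argument, and via continuity of the matrix square root at singular matrices for the coupling argument, a fact that can be cited or established by a short compactness argument.
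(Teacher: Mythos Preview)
Your proof is correct and follows essentially the same route as the paper's: write the Gaussian characteristic function $\varphi_{\mu,\Sigma}(t)=\exp\bigl(i\,t^\top\mu-\tfrac12 t^\top\Sigma t\bigr)$, observe pointwise convergence as $(\mu_n,\Sigma_n)\to(\mu,\Sigma)$, and invoke L\'evy's continuity theorem, with the same remark that the degenerate case $\Sigma=0$ is covered. Your coupling alternative via $\mu+\Sigma^{1/2}G$ is a nice extra not present in the paper.
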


\begin{proof}[Proof of \Cref{lemma:cvd}]
    We denote 
    \begin{equation}
        G_n = \mathcal N (\mu_n, \Sigma_n), \quad G = \mathcal N (\mu, \Sigma).
    \end{equation}

    The characteristic function of $G_n$ is 
    \begin{equation}
        \varphi_{G_n}(t)= \exp\left(i t^\top \mu_n - \frac{1}{2} t^\top \Sigma_n t\right), \qquad t \in \R^d.
    \end{equation}

    For every fixed $t \in \R^d$, we have
    \begin{equation}
        \varphi_{G_n}(t) \xrightarrow[n \xrightarrow{} \infty]{} \exp\left(i t^\top \mu - \frac{1}{2} t^\top \Sigma t\right) = \varphi_G(t).
    \end{equation}

    By \textit{Lévy's continuity theorem} (see \cite{Williams_1991}), we have $P_{\phi_n} \implies P_\phi$.
\end{proof}

Notice that the result holds also in the case where $\Sigma = 0$. In that case, $\varphi_G(t) = exp\left(i t^\top \mu\right)$ which is the characteristic function of the degenerate distribution $\delta_\mu = \mathcal N (\mu, 0)$.

This result will be used to derive the continuity of the function $\phi \mapsto a(x, \phi)$ in the following lemma.

\begin{lem}\label{lemma:continu}
    For some fixed $x = (s,a, s')$ and $\phi \in \Phi$, the function

    \[\phi \mapsto a(x, \phi) := \log \int_{\xi} p_\xi(s' \mid s, a) p_\phi(\xi) d\xi\]

    is continuous on $\Phi$.
\end{lem}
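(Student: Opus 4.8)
The plan is to establish \emph{sequential} continuity and then invoke metrizability of $\Phi$ (a compact subset of Euclidean space in the product topology, hence metrizable), so that sequential continuity is equivalent to continuity. Throughout, I read the displayed integral in the correct measure-theoretic sense,
\[
a(x,\phi)=\log q_\phi(s'\mid s,a),\qquad q_\phi(s'\mid s,a)=\int_\Xi p_\xi(s'\mid s,a)\,P_\phi(d\xi),
\]
which is the meaning of ``$p_\phi(\xi)\,d\xi$'' even in the degenerate case $\Sigma=0$, where $P_\phi=\delta_\mu$.

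First I would fix $x=(s,a,s')$, take any sequence $\phi_n=(\mu_n,\Sigma_n)\to\phi=(\mu,\Sigma)$ in $\Phi$, and apply \Cref{lemma:cvd} to get the weak convergence $P_{\phi_n}\Rightarrow P_\phi$ (the remark immediately following that lemma covers the limiting case $\Sigma=0$, where $P_\phi=\delta_\mu$). By \Cref{A.2}, the map $\xi\mapsto p_\xi(s'\mid s,a)$ is continuous and bounded (by $M$), so it is an admissible bounded continuous test function; the very definition of weak convergence (equivalently, the portmanteau theorem) then yields
\[
q_{\phi_n}(s'\mid s,a)=\int p_\xi(s'\mid s,a)\,P_{\phi_n}(d\xi)\;\xrightarrow[n\to\infty]{}\;\int p_\xi(s'\mid s,a)\,P_\phi(d\xi)=q_\phi(s'\mid s,a).
\]

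Second I would pass to the logarithm. By \Cref{A.4} we have $q_{\phi_n}(s'\mid s,a)\ge c>0$ for every $n$ and $q_\phi(s'\mid s,a)\ge c$, and also all these quantities are $\le M$; since $\log$ is continuous on $[c,M]$ (indeed $\tfrac1c$-Lipschitz there, cf.\ \Cref{lem:log-lip}), it follows that $a(x,\phi_n)=\log q_{\phi_n}(s'\mid s,a)\to\log q_\phi(s'\mid s,a)=a(x,\phi)$. As $\phi_n\to\phi$ was arbitrary and $\Phi$ is metrizable, $\phi\mapsto a(x,\phi)$ is continuous on $\Phi$.

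The only genuinely delicate point is the treatment of the degenerate limit $\Sigma=0$: one must check that the weak-convergence step is still legitimate when $P_\phi$ collapses to a Dirac mass, which is exactly what the remark after \Cref{lemma:cvd} supplies; in that case the argument simply reduces to continuity of $\xi\mapsto p_\xi(s'\mid s,a)$ at $\mu$, again guaranteed by \Cref{A.2}. A secondary bookkeeping remark is that a nondegenerate Gaussian places mass outside $\Xi$, so $p_\xi(s'\mid s,a)$ should be understood as (continuously, boundedly) extended to all of $\mathbb R^d$; this has no effect on the argument since only the bounded-continuous-test-function property of $\xi\mapsto p_\xi(s'\mid s,a)$ is used.
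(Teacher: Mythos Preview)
Your proof is correct and follows essentially the same approach as the paper: both use \Cref{lemma:cvd} to get weak convergence $P_{\phi_n}\Rightarrow P_\phi$, invoke \Cref{A.2} for bounded continuity of $\xi\mapsto p_\xi(s'\mid s,a)$ to pass the integral to the limit, and then compose with the logarithm to conclude via sequential continuity. Your version is somewhat more careful in its bookkeeping (explicitly invoking \Cref{A.4} for the lower bound, handling the degenerate $\Sigma=0$ case, and noting the extension of $p_\xi$ beyond $\Xi$), but the substance is identical.
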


\begin{proof}[Proof of \Cref{lemma:continu}]
    For $\xi \in \Xi$, we denote $h_x(\xi) := p_\xi(s' \mid s, a)$.

    $h_x$ is continuous on $\Xi$ (by Assumption \ref{A.2}) and bounded on $\Xi$, because 
        \begin{align*}
            \forall \xi \in \Xi, \; \left| h_x(\xi) \right| &= \left| p_\xi(s' \mid s, a) \right| \leq M \qquad (\text{again by Assumption } \ref{A.2}).
        \end{align*}

     Let $\ens{\phi_n} := \ens{(\mu_n, \Sigma_n)} \in \Phi^\N$ a sequence that converges to $\phi := (\mu, \Sigma)$. Notice that 
    \begin{equation}
        \int_{\xi} p_\xi(s' \mid s, a) p_{\phi_n}(\xi) d\xi = \E_{P_{\phi_n}}\left[ h_x \right],
    \end{equation}

    and since $P_{\phi_n} \implies P_\phi$ (from Lemma \ref{lemma:cvd}), then $\E_{P_{\phi_n}}\left[ h_x \right] \xrightarrow[n \xrightarrow{} \infty]{} \E_{P_{\phi}}\left[ h_x \right]$.

    We then compose by the logarithm function which is continuous on $(0, \infty)$. This yields $\log \E_{P_{\phi_n}}\left[ h_x \right] \xrightarrow[n \xrightarrow{} \infty]{} \log \E_{P_{\phi}}\left[ h_x \right]$. Equivalently,
    \begin{equation}
        a(x, \phi_n) \xrightarrow[n \xrightarrow{} \infty]{} a(x, \phi).
    \end{equation}

    This concludes the proof by the \textit{sequential characterization of continuity}.
\end{proof}

Now we prove \Cref{lemma:L continu}:

    We have $L_N(\phi) = \dfrac{1}{N} \sum_{i=1}^{N} a(X_i, \phi)$, where $X_i = (s_i, a_i, s'_i) \overset{\mathrm{iid}}{\sim} P_{\xi^\star}$.

    $\Phi$ is compact (by Assumption \ref{A.3}), and by Lemma \ref{lemma:continu}, for each $x$, $\phi \mapsto a(x, \phi)$ is continuous on $\Phi$ .
    
    Additionally, the following holds for any $\phi \in \Phi,$
        \begin{align}
             \left|a(x, \phi)\right| &= \left| \log \int_{\xi} p_\xi(s' \mid s, a) p_\phi(\xi) d\xi \right|
        \end{align}
        
        By Assumptions \ref{A.2} and \ref{A.5}, we have $c \leq \int_{\xi} p_\xi(s' \mid s, a) p_\phi(\xi) d\xi \leq K$. Hence 
        \begin{align}
            \left| a(x, \phi) \right| \leq \Tilde M := \max\ens{\left| \log c \right|, \left| \log K \right|}. \label{eq:bound-a} 
        \end{align}
    
    Since $L(\phi) = \E_{X\sim P_{\xi^\star}}[a(X, \phi)]$, this implies (by Lemma 2.4 from \cite{NEWEY19942111} which is implied by Lemma 1 from \cite{TAUCHEN1985415}) that $L$ is continuous on $\Phi$ and thus uniformly continuous since $\Phi$ is compact by \textit{Heine-Cantor theorem}. Furthermore, 
        \begin{equation}
            \sup_{\phi \in \Phi} \left| L_N(\phi) - L(\phi) \right| \xrightarrow[N \to \infty]{P} 0.
        \end{equation}

\end{proof}

\begin{proof}[Proof of \Cref{lemma:infimum-g}]
    Let $\epsilon > 0$. We consider the set defined as follows
    \begin{equation}
        C_{\phi^\star, \epsilon} := \ens{\phi \in \Phi \mid \norm{\phi - \phi^\star} \geq \epsilon}.
    \end{equation}

    $C_{\phi^\star, \epsilon}$ is compact because it can be written as the intersection of a compact set  
    \begin{equation}
        C_{\phi^\star, \epsilon} = \Phi \; \cap \; f_{\phi^\star}^{-1}\Bigl([\epsilon, \infty )\Bigr),
    \end{equation}

    where we denote $f_{\phi^\star}: \phi \mapsto \norm{\phi - \phi^\star}$. Indeed, $\Phi$ is compact (by Assumption \ref{A.3}) and $f_{\phi^\star}^{-1}([\epsilon, \infty))$ is closed as the inverse image of the closed set $[\epsilon, \infty)$ by the continuous function $f_{\phi^\star}$.

    The function $g: \phi \mapsto L(\phi^\star) - L(\phi)$ is continuous (by Lemma \ref{lemma:L continu}) on the compact set $C_{\phi^\star, \epsilon}$, hence by the \textit{extreme value theorem}, $g$ attains its minimum on $C_{\phi^\star, \epsilon}$ in some $\Tilde \phi \in \Phi$.

    Thus
    \begin{equation}
        \forall \phi \in C_{\phi^\star, \epsilon}, \; L(\phi^\star) - L(\phi) \geq g(\Tilde \phi).
    \end{equation}

    By Lemma \ref{lemma:unique-maximizer}, $g \geq 0$ on $\Phi$ and 
    \begin{equation}
        g(\phi) = 0 \iff \phi = \phi^\star.
    \end{equation}

    Since $\Tilde \phi \neq \phi^\star$ (because $\Tilde \phi \in C_{\phi^\star, \epsilon}$), we have $g(\Tilde \phi) > 0$. Thus, the lemma holds with the choice of $\eta(\epsilon) = g(\Tilde \phi) > 0$.
\end{proof}

\section{Omitted Proofs in \Cref{sec:strong-consistency}}\label{app:proofs-strong}

Before proving \Cref{lem:cv-as-D}, we state and prove a few preliminary lemmas.

\paragraph{Notation for Strong Consistency}
We define the \emph{diameter of $\Phi$} by
\begin{equation}
    \Diam(\Phi) := \sup_{\phi,\psi\in\Phi} \|\phi - \psi\|.
\end{equation}

We begin with the following technical lemma, which gives an upper bound on the number of closed balls of radius $r = \epsilon / L$ needed to cover $\Phi$.

\begin{lem}\label{lem:covering}
Let $0 < \epsilon < 2\,\Diam(\Phi)\,L$, and let $N_\epsilon$ be the minimum number of closed balls of radius $r = \frac{\epsilon}{L}$ required to cover $\Phi$.  Then
\begin{equation}
    N_\epsilon \;\le\; 4^d \Bigl(\frac{\Diam(\Phi)\,L}{\epsilon}\Bigr)^d.
\end{equation}
\end{lem}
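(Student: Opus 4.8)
The plan is to use a standard volumetric packing-covering argument in $\mathbb R^d$. First I would reduce the problem to covering $\Phi$ by Euclidean balls of radius $r=\epsilon/L$; since $\Phi\subset\Tilde\Xi\times\{\Sigma:0\preceq\Sigma\preceq\sigma_{\max}I\}$ is bounded, $\Diam(\Phi)<\infty$, so this is well posed. The key device is to pass through a \emph{maximal $r$-separated set}: let $\{\phi_1,\dots,\phi_{N}\}\subset\Phi$ be a maximal collection of points that are pairwise at distance $\ge r$. By maximality every point of $\Phi$ lies within $r$ of some $\phi_i$, so the closed balls $\bar B(\phi_i,r)$ cover $\Phi$, giving $N_\epsilon\le N$. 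It therefore suffices to upper bound the cardinality $N$ of any $r$-separated subset of $\Phi$.

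Next I would run the volume comparison. The open balls $B(\phi_i,r/2)$ are pairwise disjoint (two points at distance $\ge r$ cannot have overlapping radius-$r/2$ balls), and each $\phi_i$ lies in $\Phi$, which is contained in a single ball of radius $\Diam(\Phi)$ centered at any fixed point of $\Phi$; hence every $B(\phi_i,r/2)$ is contained in the enlarged ball of radius $\Diam(\Phi)+r/2$. Comparing Lebesgue volumes in $\mathbb R^d$,
\begin{equation}
    N\cdot \mathrm{vol}\bigl(B(0,r/2)\bigr)\;\le\;\mathrm{vol}\bigl(B(0,\Diam(\Phi)+r/2)\bigr),
\end{equation}
and since $\mathrm{vol}(B(0,\rho))=\rho^d\,\mathrm{vol}(B(0,1))$, the constant cancels and
\begin{equation}
    N\;\le\;\Bigl(\frac{\Diam(\Phi)+r/2}{r/2}\Bigr)^{d}=\Bigl(1+\frac{2\Diam(\Phi)}{r}\Bigr)^{d}.
\end{equation}

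Finally I would plug in $r=\epsilon/L$ and simplify using the hypothesis $\epsilon<2\Diam(\Phi)L$, which is exactly what makes the crude bound clean: it gives $2\Diam(\Phi)/r=2\Diam(\Phi)L/\epsilon>1$, so $1+2\Diam(\Phi)/r\le 2\cdot 2\Diam(\Phi)L/\epsilon=4\Diam(\Phi)L/\epsilon$, and therefore
\begin{equation}
    N_\epsilon\le N\le\Bigl(\frac{4\,\Diam(\Phi)\,L}{\epsilon}\Bigr)^{d}=4^d\Bigl(\frac{\Diam(\Phi)\,L}{\epsilon}\Bigr)^{d},
\end{equation}
as claimed. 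The only mildly delicate points — neither is a real obstacle — are making sure the separated set is taken \emph{inside} $\Phi$ so the disjoint small balls are centered at legitimate points (handled by maximality), and being slightly careful that covering balls are allowed to stick out of $\Phi$ while the packing balls need only their centers in $\Phi$; the volume bound only ever uses containment in the big ambient ball, so finiteness and compactness of $\Phi$ suffice and no regularity of its boundary is needed.
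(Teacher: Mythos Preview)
Your proposal is correct and follows essentially the same approach as the paper: both take a maximal $r$-separated set in $\Phi$, use maximality to get a covering by radius-$r$ balls (hence $N_\epsilon\le N$), then bound $N$ by the standard volume comparison of disjoint radius-$r/2$ balls inside an ambient ball of radius $\Diam(\Phi)+r/2$, and finally invoke $\epsilon<2\Diam(\Phi)L$ to simplify $(1+2\Diam(\Phi)/r)^d$ to $(4\Diam(\Phi)L/\epsilon)^d$. The only cosmetic difference is that the paper uses strict separation ($>r$) and appeals to compactness to show the greedy construction terminates, whereas you phrase it directly as a maximal $\ge r$-separated set; the arguments are otherwise identical.
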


\begin{proof}[Proof of \Cref{lem:covering}]
We construct a sequence $\phi_1, \phi_2, \dots$ in $\Phi$ satisfying
\begin{equation}\label{eq:construction}
\forall i \neq j,\quad \|\phi_i - \phi_j\| > r.
\end{equation}
This process must terminate after finitely many steps; denote the final index by $K$.  Indeed, if it were infinite, then compactness of $\Phi$ would yield a convergent subsequence of $\{\phi_n\}$, contradicting \eqref{eq:construction}.

By construction,
\begin{equation}
    \Phi \;\subset\; \bigcup_{k=1}^K \mathrm B(\phi_k,r),
\end{equation}

for otherwise we could pick some $\phi\notin\bigcup_{k=1}^K \mathrm B(\phi_k,r)$ to continue the process, contradicting the definition of $K$.  Hence $N_\epsilon\le K$.

Next, observe that the closed balls $\mathrm B(\phi_k,r/2)$, $k=1,\dots,K$, are pairwise disjoint: if there were
\(\phi\in \mathrm B(\phi_i,r/2)\cap \mathrm B(\phi_j,r/2)\) with \(i\neq j\),
then
\begin{equation}
    \|\phi_i - \phi_j\|
\le \|\phi_i - \phi\| + \|\phi - \phi_j\|
\le r/2 + r/2 = r,
\end{equation}

contradicting \eqref{eq:construction}.

Moreover, for each \(k\),
\begin{equation}
    \mathrm B(\phi_k,r/2)\;\subset\;\mathrm B\bigl(\phi_1,\Diam(\Phi) + r/2\bigr),
\end{equation}

since if \(\|\phi - \phi_k\|\le r/2\) then
\(\|\phi - \phi_1\|\le \|\phi - \phi_k\| + \|\phi_k - \phi_1\| \le r/2 + \Diam(\Phi)\).

Thus 
\begin{equation}
    \bigcup_{k=1}^K \mathrm B(\phi_k,r/2) \;\subset\; \mathrm B\bigl(\phi_1,\Diam(\Phi) + r/2\bigr),
\end{equation}

and by comparing volumes of disjoint balls in $\R^d$ we get
\begin{equation}
    K\;\frac{(r/2)^d\pi^{d/2}}{\Gamma(\tfrac d2+1)} \;\le\; \frac{(\Diam(\Phi) + r/2)^d\pi^{d/2}}{\Gamma(\tfrac d2+1)}.
\end{equation}

Hence
\begin{equation}
    N_\epsilon \;\le\; K \;\le\; \Bigl(1 + \tfrac{2\,\Diam(\Phi)}{r}\Bigr)^d \;\le\; \Bigl(\tfrac{4\,\Diam(\Phi)}{r}\Bigr)^d,
\end{equation}
where the final inequality uses \(\epsilon < 2\,\Diam(\Phi)\,L\).
\end{proof}

In the following two lemmas establish a sufficient condition for the almost sure convergence.

\begin{lem}\label{lem:prob-union-zero}
    Let $(A_\ell)_{\ell \geq 1}$ be a sequence of events. We have 
    \begin{equation}
        P\left(\bigcup_{\ell \geq 1} A_\ell \right) = 0 \iff \forall \ell \geq 1, \quad P(A_\ell) = 0.
    \end{equation}
\end{lem}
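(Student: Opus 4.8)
The plan is to prove the two implications separately, each as a one-line consequence of a standard property of probability measures: monotonicity for the forward direction, and countable subadditivity (the union bound) for the reverse direction.

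First I would handle the direction ``$\Rightarrow$''. Assuming $P\bigl(\bigcup_{\ell\ge 1}A_\ell\bigr)=0$, I fix an arbitrary index $\ell\ge 1$ and note the inclusion $A_\ell\subset\bigcup_{k\ge 1}A_k$. By monotonicity of $P$ this yields $0\le P(A_\ell)\le P\bigl(\bigcup_{k\ge 1}A_k\bigr)=0$, hence $P(A_\ell)=0$. Since $\ell$ was arbitrary, this gives the claim.

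Next I would handle the direction ``$\Leftarrow$''. Assuming $P(A_\ell)=0$ for every $\ell\ge 1$, I apply countable subadditivity of the probability measure $P$ to the countable family $(A_\ell)_{\ell\ge 1}$, obtaining
\begin{equation}
    0 \;\le\; P\Bigl(\bigcup_{\ell\ge 1}A_\ell\Bigr) \;\le\; \sum_{\ell\ge 1}P(A_\ell) \;=\; \sum_{\ell\ge 1}0 \;=\; 0,
\end{equation}
so $P\bigl(\bigcup_{\ell\ge 1}A_\ell\bigr)=0$, which completes the equivalence.

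There is essentially no hard part here: the only ingredients are the monotonicity and countable subadditivity of a probability measure, both of which hold by the axioms of a probability space. The one point worth flagging is that subadditivity must be the \emph{countable} version (not just finite additivity), which is exactly why the statement is restricted to a sequence $(A_\ell)_{\ell\ge 1}$ rather than an arbitrary uncountable family.
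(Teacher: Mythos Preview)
Your proof is correct and essentially identical to the paper's own argument: the forward direction uses monotonicity via the inclusion $A_\ell\subset\bigcup_k A_k$, and the reverse direction uses countable subadditivity (which the paper cites as \textit{Boole's inequality}).
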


\begin{proof}[Proof of \Cref{lem:prob-union-zero}]
    If $P\left(\bigcup_{\ell \geq 1} A_\ell \right) = 1$, then for all $\ell \geq 1$, we have clearly $P(A_\ell) \leq P\left(\bigcup_{\ell \geq 1} A_\ell \right) = 0$ and so $P(A_\ell) = 0$.

    If $P(A_\ell) = 0$ for every $\ell \geq 0$, then we have by \textit{Boole's inequality},
    \begin{equation}
        P\left(\bigcup_{\ell \geq 1} A_\ell \right) \leq \sum_{\ell \geq 0} P(A_\ell) = 0.
    \end{equation}
\end{proof}

\begin{lem}\label{lem:CS-cvas}
    Let $\ens{Z_n}_{n}$ be a sequence of random variables. We have 
    \begin{equation}
        \forall \epsilon > 0, \quad \sum_{n \geq 1} P\left( \left|Z_n \right| \geq \epsilon \right) < \infty \implies Z_n \xrightarrow[n \to \infty]{\rm a.s.} 0.
    \end{equation}
\end{lem}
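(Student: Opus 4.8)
The plan is to invoke the first Borel--Cantelli lemma together with the countable-union argument already recorded in \Cref{lem:prob-union-zero}. Fix $\epsilon > 0$ and set $A_n^\epsilon := \{|Z_n| \ge \epsilon\}$. The hypothesis gives $\sum_{n \ge 1} P(A_n^\epsilon) < \infty$, so by Borel--Cantelli
\[
P\Bigl(\limsup_{n \to \infty} A_n^\epsilon\Bigr) = P\bigl(\{|Z_n| \ge \epsilon \text{ infinitely often}\}\bigr) = 0 .
\]
Write $B_\epsilon := \limsup_n A_n^\epsilon$. On the complement $B_\epsilon^c$, only finitely many of the events $A_n^\epsilon$ occur, i.e.\ there exists $n_0$ (depending on the sample point) with $|Z_n| < \epsilon$ for all $n \ge n_0$; equivalently $\limsup_n |Z_n| \le \epsilon$ on $B_\epsilon^c$.

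Next I would pass from a single $\epsilon$ to a countable family. Consider $B := \bigcup_{\ell \ge 1} B_{1/\ell}$. Since $P(B_{1/\ell}) = 0$ for every $\ell \ge 1$ by the previous step, \Cref{lem:prob-union-zero} yields $P(B) = 0$. On $B^c = \bigcap_{\ell \ge 1} B_{1/\ell}^c$, the bound from the first paragraph holds simultaneously for every $\ell$: for each $\ell$ there is $n_0(\ell)$ with $|Z_n| < 1/\ell$ whenever $n \ge n_0(\ell)$. This is exactly the statement that $Z_n \to 0$ on $B^c$, and since $P(B^c) = 1$ we conclude $Z_n \xrightarrow{\mathrm{a.s.}} 0$.

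There is no real obstacle here; the only point requiring a little care is the reduction from the quantifier ``for all $\epsilon > 0$'' in the definition of almost-sure convergence to the countable collection $\{1/\ell : \ell \ge 1\}$, which is handled cleanly by \Cref{lem:prob-union-zero}. All other ingredients (Borel--Cantelli, monotonicity of $A_n^\epsilon$ in $\epsilon$) are standard, so the write-up should be short.
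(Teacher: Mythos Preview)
Your proposal is correct and follows essentially the same approach as the paper: both arguments apply the first Borel--Cantelli lemma to the events $\{|Z_n|\ge 1/\ell\}$ and then use \Cref{lem:prob-union-zero} to pass from each $\ell$ to the countable union. The only cosmetic difference is the order---you apply Borel--Cantelli first and then take the countable union, whereas the paper first rewrites almost-sure convergence in terms of the countable family and then invokes Borel--Cantelli---but the substance is identical.
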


\begin{proof}[Proof of \Cref{lem:CS-cvas}]
    We have by definition of the almost sure convergence, $Z_n \xrightarrow[n \to\infty]{\rm a.s.} 0$ if and only if $\displaystyle{P\left( \lim_{n \xrightarrow{} \infty} Z_n = 0\right) = 1}$. Equivalently, 
    \begin{equation}
        P\left(\forall \epsilon > 0, \exists n \geq 1, \forall m \geq n, \left| Z_n \right| < \epsilon \right) = 1,
    \end{equation}

    and since we can replace $\epsilon$ by any sequence of positive real numbers that converges to $0$, the previous condition is equivalent to 
    \begin{equation}
        P\left( \bigcap_{\ell \geq 1} \bigcup_{n \geq 1} \bigcap_{m \geq n} \ens{\left| Z_n \right| < \frac{1}{\ell}} \right) = 1.
    \end{equation}

    Considering the complementary event, this is equivalent to 
    \begin{equation}
        P\left( \bigcup_{\ell \geq 1} \bigcap_{n \geq 1} \bigcup_{m \geq n} \ens{\left| Z_n \right| \geq \frac{1}{\ell}} \right) = 0.
    \end{equation}

    Using \Cref{lem:prob-union-zero}, in order to have the almost sure convergence of $Z_n$ to $0$, it is sufficient to prove that
    \begin{equation}
        \forall \ell \geq 1, \quad  P\left( \bigcap_{n \geq 1} \bigcup_{m \geq n} \ens{\left| Z_n \right| \geq \frac{1}{\ell}} \right) = 0.
    \end{equation}

    Now suppose that for all $\epsilon > 0$, $\sum_{n \geq 1} P\left( \left|Z_n \right| \geq \epsilon \right) < \infty$. This implies that for all $\ell \geq 1$, we have 
    \begin{equation}
        \sum_{n \geq 1} P\left( \left|Z_n \right| \geq \dfrac{1}{\ell} \right) < \infty.
    \end{equation}

    Using \textit{Borel-Cantelli lemma}, this implies that 
    \begin{equation}
        \forall \ell \geq 1, \quad P\left(\bigcap_{n \geq 1} \bigcup_{m \geq n} \ens{\left|Z_n \right| \geq \dfrac{1}{\ell}} \right) = 0.
    \end{equation}

    This concludes the proof.
\end{proof}

\begin{lem}\label{lem:hoeffding}
    For any fixed $\phi \in \Phi$, and $\epsilon > 0$, we have 
    \begin{equation}
       P\left(\left| L_N(\phi) - L(\phi) \right| \geq \epsilon \right) \leq  2 \exp\left( -\dfrac{N \epsilon^2}{2 \Tilde M^2} \right). 
    \end{equation} 
\end{lem}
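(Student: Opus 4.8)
The plan is to recognize the claim as a direct instance of \emph{Hoeffding's inequality} for bounded i.i.d.\ summands. First I would fix $\phi\in\Phi$ and set $Y_i := a(X_i,\phi)$ for $i=1,\dots,N$, where $X_i=(s_i,a_i,s_i')$ are the i.i.d.\ transitions in $\mathcal D$. By definition $L_N(\phi)=\tfrac1N\sum_{i=1}^N Y_i$, and since each $X_i\sim P_{\xi^*}$, the $Y_i$ are i.i.d.\ with common mean $\mathbb E[Y_i]=\mathbb E_{X\sim P_{\xi^*}}[a(X,\phi)]=L(\phi)$; independence of the $Y_i$ is inherited from independence of the $X_i$, and $x\mapsto a(x,\phi)$ is measurable for fixed $\phi$ thanks to the continuity/measurability structure of $p_\xi$ in \Cref{A.2}, so the $Y_i$ are bona fide random variables.

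The second step is to invoke the uniform bound already established in Inequality~\eqref{eq:bound-a}: under \Cref{A.2,A.4} we have $c\le q_\phi(s'\mid s,a)\le M$, hence $|a(x,\phi)|\le\widetilde M=\max\{|\log c|,|\log M|\}$ for every transition $x$ and every $\phi\in\Phi$. In particular each $Y_i$ takes values almost surely in the interval $[-\widetilde M,\widetilde M]$, whose length is $2\widetilde M$.

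Third, I would apply the two-sided Hoeffding inequality: for independent $Y_i\in[\alpha_i,\beta_i]$,
\[
P\!\left(\Bigl|\tfrac1N\textstyle\sum_{i=1}^N Y_i-\tfrac1N\textstyle\sum_{i=1}^N\mathbb E[Y_i]\Bigr|\ge\epsilon\right)\le 2\exp\!\Bigl(-\tfrac{2N^2\epsilon^2}{\sum_{i=1}^N(\beta_i-\alpha_i)^2}\Bigr).
\]
Substituting $\beta_i-\alpha_i=2\widetilde M$ gives $\sum_{i=1}^N(\beta_i-\alpha_i)^2=4N\widetilde M^2$, so the exponent simplifies to $-\tfrac{2N^2\epsilon^2}{4N\widetilde M^2}=-\tfrac{N\epsilon^2}{2\widetilde M^2}$, which is exactly the asserted bound after identifying $\tfrac1N\sum_i Y_i=L_N(\phi)$ and $\tfrac1N\sum_i\mathbb E[Y_i]=L(\phi)$.

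There is essentially no hard step here: the only point worth care is that the boundedness of $a(\cdot,\phi)$ must be genuinely uniform over the support of the data, which is precisely what \eqref{eq:bound-a} supplies via \Cref{A.2,A.4}. Once that is in hand, Hoeffding's inequality applies verbatim and the conclusion is immediate.
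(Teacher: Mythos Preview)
Your proposal is correct and follows essentially the same route as the paper: define the bounded i.i.d.\ summands $Y_i=a(X_i,\phi)$, invoke the uniform bound $|a(x,\phi)|\le\widetilde M$ from~\eqref{eq:bound-a}, and apply two-sided Hoeffding with range $2\widetilde M$ to obtain the stated exponent. The paper's version is merely a slightly more compressed write-up of the same computation.
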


\begin{proof}[Proof of \Cref{lem:hoeffding}]
    We have 
    \begin{equation}
        L_N(\phi) = \frac{1}{N} \sum_{i = 1}^N a(X_i, \phi), \quad L(\phi) = \E_{X\sim P^\star} \left[a(X, \phi)\right],
    \end{equation}

    where $X, X_1, \dots, X_N \overset{\mathrm{iid}}{\sim} P_{\xi^\star}.$

    We already establish that $\left| a(x, \phi) \right| \leq \Tilde M$ (see \ref{eq:bound-a}), hence
    \begin{align}
         P\left(\left| L_N(\phi) - L(\phi) \right| \geq \epsilon \right) &=  P\left(\left| \sum_{i = 1}^N \left( a(X_i, \phi) - \E_{X\sim P^\star} \left[a(X, \phi)\right] \right) \right| \geq N \epsilon \right)\\
         & \leq 2 \exp{\left(-\dfrac{2 (N \epsilon)^2}{\sum_{i=1}^N (2 \Tilde M)^2} \right)} \label{ineq:hoeffd} \\
         &\leq  2 \exp\left( -\dfrac{N \epsilon^2}{2 \Tilde M ^2} \right),\label{ineq:hoeffding-fixed-phi}
    \end{align}
    where Inequality \ref{ineq:hoeffd} results from \textit{Hoeffding's inequality}.
\end{proof}

\begin{proof}[Proof of \Cref{lem:cv-as-D}]
    Let $0 < \epsilon < 2DL$. We cover $\Phi$ by $N_\epsilon$ closed balls of radius $r = \epsilon / L$, i.e., \[\Phi \subset \bigcup_{k = 1}^{N_\epsilon} \mathrm B (\phi_k, r),\] for some $\phi_1, \dots, \phi_{N_\epsilon} \in \Phi$, where $N_\epsilon \leq 4^d \left( \dfrac{DL}{\epsilon} \right)^d$ by \Cref{lem:covering}. 

    For all $\phi \in \Phi$, there exists an integer $1 \leq k(\phi) \leq N_\epsilon$ such that $\norm{\phi - \phi_{k(\phi)}} \leq r$, hence it follows from \Cref{A.6} that 
    \begin{align}
        \forall x, \quad \norm{a(x, \phi) - a(x, \phi_{k(\phi)})} \leq L \norm{\phi - \phi_{k(\phi)}} \leq L r = \epsilon . \label{eq:lips}
    \end{align}

    We have 
    \begin{align*}
        \left| L_N(\phi) - L(\phi) \right| &\leq \left| L_N(\phi) - L_N(\phi_{k(\phi)}) \right| + \left| L_N(\phi_{k(\phi)}) - L(\phi_{k(\phi)}) \right| + \left| L(\phi_{k(\phi)}) - L(\phi) \right|.
    \end{align*}

    The first term can be bounded using Inequality \ref{eq:lips} as follows,
    \begin{equation}
        \left| L_N(\phi) - L_N(\phi_{k(\phi)}) \right| \leq \dfrac{1}{N} \sum_{i = 1}^N \left|a(X_i, \phi) - a(X_i, \phi_{k(\phi)}) \right| \leq \epsilon.
    \end{equation}

    Similarly, the third term satisfies 
    \[\left| L(\phi_{k(\phi)}) - L(\phi) \right| = \left| \E_X a(X, \phi_{k(\phi)}) - \E_X a(X, \phi) \right| \leq \E_X \left| a(X, \phi_{k(\phi)}) - a(X, \phi)\right| \leq \epsilon,\]

    where the first equality holds from \textit{Jensen's inequality}.

    Putting these inequalities together yields 
    \begin{equation}
        \sup_{\phi \in \Phi}\left| L_N(\phi) - L(\phi) \right| \leq \max_{i = 1, \dots, N_\epsilon} \left| L_N(\phi_i) - L(\phi_i) \right| + 2 \epsilon.
    \end{equation}

    This implies that 
    \begin{align}
        P\left(\sup_{\phi \in \Phi}\left| L_N(\phi) - L(\phi) \right| \geq 3 \epsilon \right) &\leq P\left( \max_{i = 1, \dots, N_\epsilon} \left| L_N(\phi_i) - L(\phi_i) \right| \geq \epsilon \right) \\
        & \leq \sum_{i = 1}^{N_\epsilon} P\left( \left| L_N(\phi_i) - L(\phi_i) \right| \geq \epsilon \right) \label{ineq:union-bound}\\
         &  \leq \sum_{i = 1}^{N_\epsilon} 2 \exp\left( -\dfrac{N \epsilon^2}{2 M^2} \right) \label{eq:application-hoeffding} \\
        &= 2 N_\epsilon \exp\left( - \dfrac{N \epsilon^2}{2 M^2} \right)  \\
        &\leq 2 \cdot 4^d \left( \dfrac{DL}{\epsilon} \right)^d \exp\left( - \dfrac{N \epsilon^2}{2 M^2} \right)
        \end{align}

    where \Cref{ineq:union-bound} uses union bound, \Cref{eq:application-hoeffding} follows from \Cref{lem:hoeffding} and the last inequality follows from \Cref{lem:covering}.

    This yields when $N \to \infty$ 
    \begin{equation}
        P\left(\sup_{\phi \in \Phi}\left| L_N(\phi) - L(\phi) \right| \geq 3 \epsilon \right) = o\left( \dfrac{1}{N^2} \right). 
    \end{equation}

    This assures that $\sum_{N \geq 1} P\left( \sup_{\phi \in \Phi} \left| L_N(\phi) - L(\phi) \right| \geq 3 \epsilon \right) < \infty$, which gives by \Cref{lem:CS-cvas}:
    \begin{equation}
        \sup_{\phi \in \Phi} \left| L_N(\phi) - L(\phi) \right| \xrightarrow[N \to \infty]{\rm a.s.} 0.
    \end{equation}
\end{proof}

\begin{proof}[Proof of \Cref{thm:strong-consistency}]

By the preceding lemma we have the event
\begin{equation}
     P\left( \Omega_0 :=\Bigl\{\,
        \omega:
        \sup_{\phi\in\Phi}|L_N(\phi,\omega)-L(\phi)|
        \xrightarrow[N\to\infty]{}0
     \Bigr\} \right) = 1. 
\end{equation}

Fix $\omega\in\Omega_0$ and, let $\epsilon > 0$. From \Cref{lemma:infimum-g}, there exists $\eta > 0$ such that
    \begin{equation}\label{eq:gap}
        \forall \phi \in \Phi, \quad  \norm{\phi^\star - \phi} \geq \epsilon \implies L(\phi^\star) - L(\phi) \geq \eta > 0.
    \end{equation}

Since $\omega\in\Omega_0$, there exists a random index $N_0(\omega,\eta)$ with
\begin{equation}\label{eq:unifdev}
  \sup_{\phi\in\Phi}|L_N(\phi)-L(\phi)| < \eta / 3
  \quad\forall N\ge N_0(\omega,\eta).
\end{equation}

Take $N\ge N_0(\omega,\eta)$ and suppose, towards a contradiction, that
$\|\widehat{\phi}_N(\omega) - \phi^\star\|\ge \epsilon$.  Then, using
\eqref{eq:gap}--\eqref{eq:unifdev},
\begin{equation}
    L_N(\widehat{\phi}_N(\omega))
     \le L(\widehat{\phi}_N(\omega))+\eta /3
     \le L(\phi^\star)- \eta+\eta / 3
     =   L(\phi^\star)- 2 \eta / 3 \leq L_N(\phi^\star) - \eta / 3
     < L_N(\phi^\star),  
\end{equation}

which contradicts the maximality of $\widehat{\phi}_N(\omega)$.
Hence, for all $N\ge N_0(\omega,\eta)$,
\begin{equation}\label{eq:delta-band}
  \|\widehat{\phi}_N(\omega) -\phi^\star\|<\epsilon.
\end{equation}
This implies that $\Omega_0 \subset \ens{\omega: \widehat{\phi}_N (\omega) \xrightarrow[N \to \infty]{} \phi^\star}$. Since $P(\Omega_0) = 1$, we conclude

\begin{equation}
    \widehat{\phi}_N \xrightarrow[N\to\infty]{\rm a.s.} \phi^\star.
\end{equation}

 
\end{proof}

\section{Omitted Proofs in \Cref{sec:assumptions-relaxations}}\label{sec:proofs-relaxations}

\subsection{Relaxation of the Mixture Positivity Assumption}\label{sec:relaxation-mixture}
\begin{restate-lem}{\ref{lem:relaxation-ass-3}}
    The weak consistency of ODR still holds if we replace \Cref{A.4} with the following (weaker) assumption:
    \begin{equation}
        P\left( \inf_{\phi} q_\phi(x)  \leq \epsilon \right) \leq \frac{1}{(\log(\epsilon))^2 } \quad \text{ for } \epsilon \text{ sufficiently small}.
    \end{equation} 
\end{restate-lem}

\begin{proof}[Proof of \Cref{lem:relaxation-ass-3}]
    We start by proving these two elementary lemmas.
    \begin{lem}\label{lem:esperance-positive}
        For any almost surely non-negative random variable $Z$, i.e., $P(Z \geq 0) = 1$, we have
        \begin{equation}
            \E[Z] = \int_{0}^\infty P(Z \geq \alpha) \di \alpha.
        \end{equation}
    \end{lem}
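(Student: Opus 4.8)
The plan is to invoke the standard layer-cake (Fubini--Tonelli) representation of the expectation. First I would record the pointwise identity: on the almost-sure event $\{Z\ge 0\}$ we have $Z=\int_0^{Z}\di\alpha=\int_0^\infty \mathbbm 1\{\alpha<Z\}\,\di\alpha$, and since $\{\alpha<Z\}$ and $\{\alpha\le Z\}$ differ only at the single point $\alpha=Z$ (a Lebesgue-null set for fixed $\omega$), this also equals $\int_0^\infty \mathbbm 1\{Z\ge\alpha\}\,\di\alpha$. Thus, almost surely, $Z=\int_0^\infty \mathbbm 1\{Z\ge\alpha\}\,\di\alpha$.

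Next I would take expectations on both sides and swap $\E$ with the $\di\alpha$-integral. The map $(\omega,\alpha)\mapsto \mathbbm 1\{Z(\omega)\ge\alpha\}$ is jointly measurable with respect to the product $\sigma$-algebra, being the indicator of the measurable set $\{(\omega,\alpha): Z(\omega)-\alpha\ge 0\}$, and it is non-negative, so \emph{Tonelli's theorem} applies with no integrability hypothesis and gives
\[
\E[Z]=\E\!\left[\int_0^\infty \mathbbm 1\{Z\ge\alpha\}\,\di\alpha\right]=\int_0^\infty \E\!\left[\mathbbm 1\{Z\ge\alpha\}\right]\di\alpha=\int_0^\infty P(Z\ge\alpha)\,\di\alpha .
\]

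Two small points I would mention for completeness: the identity holds in $[0,\infty]$, so it remains valid when $\E[Z]=+\infty$; and almost-sure (rather than sure) non-negativity is harmless, since altering $Z$ on a null set changes neither side. The only real obstacle is the clean invocation of Tonelli — i.e., verifying joint measurability and non-negativity of the integrand on the product space — after which everything reduces to the elementary identity $\int_0^{Z}\di\alpha=Z$.
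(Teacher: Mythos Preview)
Your proof is correct and follows essentially the same route as the paper: both arguments reduce to the pointwise identity $Z=\int_0^\infty \mathbbm 1\{Z\ge\alpha\}\,\di\alpha$ and then swap the expectation and the $\di\alpha$-integral via \emph{Fubini--Tonelli}. The paper simply runs the computation from the right-hand side rather than the left, but the content is identical.
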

    
    \begin{proof}[Proof of \Cref{lem:esperance-positive}]
        We have 
        \begin{align}
            \int_{0}^\infty P(Z \geq \alpha) \di \alpha &= \int_{0}^\infty \E[\mathbbm{1}_{Z \geq \alpha}] \di \alpha \\
            &= \int_{\alpha = 0}^\infty \int_{z = 0}^\infty \mathbbm{1}_{z \geq \alpha} \di P(z) \di \alpha \\
            &= \int_{z = 0}^\infty \left[ \int_{\alpha = 0}^\infty \mathbbm{1}_{z \geq \alpha} \di \alpha  \right]\di P(z)  \label{eq:tonelli-intervert} \\
            &= \int_{z = 0}^\infty \left[ \int_{\alpha = 0}^z 1 \di \alpha  \right]\di P(z)  \\
            &= \int_{z = 0}^\infty z \di P(z) \\
            &= \E[Z], \label{eq:positive-ra}
        \end{align}
        where Equality \eqref{eq:tonelli-intervert} follows from \emph{Fubini-Tonelli's theorem}, and Equality \eqref{eq:positive-ra} follows from the non-negativity of the random variable $Z$.
    \end{proof}
    
    \begin{lem}\label{lem:sup-log}
        For any positive function $f:I \to (0, \infty)$ defined on some interval $I\subset \R$, we have 
        \begin{equation}
            \sup_{x} \log f(x) = \log \sup_{x} f(x).
        \end{equation}
    \end{lem}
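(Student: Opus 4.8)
The plan is to exploit the fact that $\log\colon(0,\infty)\to\mathbb{R}$ is strictly increasing and continuous (a bijection onto $\mathbb{R}$, which we extend by the convention $\log(+\infty)=+\infty$), so that it commutes with the least-upper-bound operation. Write $s:=\sup_{x}f(x)\in(0,+\infty]$, which is well defined since $I\neq\emptyset$ and $f>0$. The identity to be proved is then $\sup_x\log f(x)=\log s$, understood as an equality in the extended reals.

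First I would establish the easy inequality $\sup_x\log f(x)\le\log s$. For every $x\in I$ we have $f(x)\le s$, and since $\log$ is nondecreasing (with $\log(+\infty)=+\infty$), this gives $\log f(x)\le\log s$; taking the supremum over $x$ yields the claim.

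For the reverse inequality $\sup_x\log f(x)\ge\log s$, I would show that $\log s$ is the \emph{least} upper bound of $\{\log f(x):x\in I\}$. Fix any real $u<\log s$. If $s<\infty$, applying the strictly increasing map $\exp$ gives $e^u<s$, so by definition of the supremum there exists $x\in I$ with $f(x)>e^u$, hence $\log f(x)>u$. If $s=+\infty$, then $\log s=+\infty$ and again $e^u<+\infty=s$ produces some $x\in I$ with $f(x)>e^u$, hence $\log f(x)>u$. In either case no $u<\log s$ is an upper bound, so $\sup_x\log f(x)\ge\log s$, and combining with the previous step finishes the proof.

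The only subtlety worth flagging is the bookkeeping around the possibly infinite value $s=+\infty$: one must fix the convention $\log(+\infty)=+\infty$ and check that the argument above covers this case uniformly (which it does, since $e^u<+\infty$ for every real $u$). Everything else is routine monotonicity of $\log$ and $\exp$, so there is no real obstacle here beyond stating the extended-real conventions cleanly.
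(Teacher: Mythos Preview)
Your proof is correct and follows essentially the same approach as the paper: both directions rely on the monotonicity of $\log$ and its inverse $\exp$. The paper's reverse inequality is phrased slightly more compactly via $f(x)=e^{\log f(x)}\le e^{\sup_x\log f(x)}$ and then taking the supremum, whereas you argue through the least-upper-bound characterization; your explicit handling of the case $s=+\infty$ is a nice addition that the paper leaves implicit.
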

    
    \begin{proof}[Proof of \Cref{lem:sup-log}]
        For any $x \in I$ we have by monotonicity of the logarithm function
        \begin{align}
            \log f(x) \leq \log \sup_{x} f(x),
        \end{align}
        hence, $\sup_{x} \log f(x) \leq \log \sup_{x} f(x)$. Furthermore,
        \begin{align}
            f(x) = e^{\log f(x)} \leq e^{\sup_{x} \log f(x)},
        \end{align}
        and taking the supremum over $x \in I$ yields $\sup_{x} f(x) \leq e^{\sup_{x} \log f(x)}$, thus 
        \begin{equation}
            \log \sup_{x} f(x) \leq sup_{x} \log f(x),
        \end{equation} 
        which concludes the proof.
    \end{proof}
    
    Note that the only passage of the proof of \Cref{thm:consistency} in which we use \Cref{A.4} is when we derive a uniform bound on the function $a$ in Inequality \ref{eq:bound-a}. More precisely, we proved that 
    \begin{equation}
        \forall x, \forall \phi \in \Phi, \, |a(x, \phi)| \leq \Tilde M := \max\ens{|\log(c)|, |\log(M)|}.
    \end{equation}
    
    While this is sufficient to apply Lemma 2.4 from \cite{NEWEY19942111}, this lemma only require to bound $a(x, \phi)$ by some quantity $d(x)$ that is independent of $\phi$ and integrable in $x$.
    
    We have 
    \begin{align}
        |a(x, \phi)| &= |\log q_\phi(x)|\\
        &= \left( \log q_\phi(x) \right)^{+} + \left( \log q_\phi(x) \right)^{-} \label{eq:val-abs-pos-neg-parts}
    \end{align}
    where $z^+$ and $z^-$ denote respectively the positive and negative parts of $z$.
    
    We have 
    \begin{align}
        \left( \log q_\phi(x) \right)^{+} &= \max(0, \log q_\phi(x)) \\
        &= \max\left(0, \log \int_{\xi} p_\xi(s' \mid s, a) p_\phi(\xi) \di \xi \right),
    \end{align}
    and by \Cref{A.2}, $p_\xi(s' \mid s, a) \leq M$, hence $\left( \log q_\phi(x) \right)^{+} \leq |\log(M)|$. Thus, the first term of \eqref{eq:val-abs-pos-neg-parts} is bounded by $|\log(M)|$ which is independent of $\phi$ and integrable in $x$.
    
    Furthermore,
    \begin{align}
        \left( \log q_\phi(x) \right)^{-} &= \max(0, - \log q_\phi(x))\\
        &= \max\left(0, \log \frac{1}{q_\phi(x)}\right) \\
        &\leq \max\left(0, \sup_{\phi} \log \frac{1}{q_\phi(x)} \right) \\
        &= \max\left(0,  \log \sup_{\phi}\frac{1}{q_\phi(x)} \right) \label{eq:log-sup} \\
        &= \max\left(0,  \log \frac{1}{\inf_{\phi} q_\phi(x)} \right),
    \end{align}
    where Equality \eqref{eq:log-sup} follows from \Cref{lem:sup-log}. The last quantity is independent of $\phi$, so we only need it to be integrable in order for the weak consistency result to hold.
    
    Since this quantity is non-negative, \Cref{lem:esperance-positive} yields
    \begin{align}
        \E\left[ \max\left(0,  \log \frac{1}{\inf_{\phi} q_\phi(x)} \right) \right] &= \int_{0}^\infty P\left( \max\left(0,  \log \frac{1}{\inf_{\phi} q_\phi(x)} \right) \geq \alpha \right)  \di \alpha \\
        &= \int_{0}^\infty P\left( \log \frac{1}{\inf_{\phi} q_\phi(x)}  \geq \alpha \right)  \di \alpha \\
        &=  \int_{0}^\infty P\left( \inf_{\phi} q_\phi(x)  \leq e^{-\alpha} \right)  \di \alpha,
    \end{align}
    
    and hence we only need to have the convergence of this integral. The integrand is bounded (between $0$ and $1$), so the integral is always convergent on $(0, 1]$. Hence, it is sufficient to have the convergence of the integral on $[1, \infty)$, e.g., one sufficient condition might be 
    \begin{equation}
        P\left( \inf_{\phi} q_\phi(x)  \leq e^{-\alpha} \right)  \leq \frac{1}{\alpha^2} \text{ for } \alpha \text{ sufficiently large},
    \end{equation}
    equivalently,
    \begin{equation}
        P\left( \inf_{\phi} q_\phi(x)  \leq \epsilon \right) \leq \frac{1}{(\log(\epsilon))^2 } \text{ for } \epsilon \text{ sufficiently small}.
    \end{equation}
    
    Notice that \Cref{A.4} implies this condition, since it implies that $\inf_{\phi} q_\phi(x) > 0$ and hence for sufficiently small $\epsilon > 0$ we have
    \begin{equation}
        P\left( \inf_{\phi} q_\phi(x) \leq \epsilon \right) = 0 \leq \frac{1}{(\log(\epsilon))^2}. 
    \end{equation}
\end{proof}

\subsection{Sufficient Condition for the Uniform Lipschitz Continuity Assumption}\label{app:cs-lip}
In this section, we prove a practical sufficient condition for \Cref{A.6}. More formally, the following holds:

\begin{restate-lem}{\ref{lem:CS-lip}}[Sufficient Condition for the Uniform Lipschitz Continuity Assumption]
    Suppose the following holds for every $x = (s, a, s')$
    \begin{enumerate}
        \item The function $\xi \mapsto p_\xi(s' \mid s, a)$ is twice continuously differentiable (of class $C^2$),
        \item There exists two constants $G_1 > 0$ and $G_2 > 0$ such that $\left|\nabla_\xi p_\xi(s' \mid s, a)  \right| \leq G_1$ and $|\nabla_\xi^2 p_\xi(s' \mid s, a)  | \leq G_2$ ,
    \end{enumerate}
    then \Cref{A.6} holds with $L = \dfrac{G_1 + G_2 / 2}{c}$.
\end{restate-lem}

Before proving this result, state and prove a technical lemma that we use in our proof.
\begin{lem}\label{lem:log-lip}
    For any $c > 0$, the logarithm function $\log$ is $\frac{1}{c}$-Lipschitz on $[c, \infty)$.
\end{lem}
\begin{proof}[Proof of \Cref{lem:log-lip}]
    Let $x$ and $y$ be two real numbers such that $c \leq x < y$.
    We have
    \begin{align}
        \left| \log(y) - \log(x) \right| &= \log(y) -\log(x) =\log\left( \dfrac{y}{x} \right) = \log\left(1 + \dfrac{y - x}{x} \right)\leq \dfrac{y-x}{x},
    \end{align}
    and since $x \geq c$, it follows
    \begin{equation}
        \left| \log(y) - \log(x) \right| \leq \frac{1}{c} (y - x) = \frac{1}{c} | y - x |.
    \end{equation}
\end{proof}
Notice that this result can also be proved using the \emph{mean value inequality}. 

\begin{proof}[Proof of \Cref{lem:CS-lip}]
    Our goal is to prove that under the two assumptions of \Cref{lem:CS-lip}, we have
  \begin{equation}
      \forall \phi := (\mu, \Sigma),\phi' := (\mu', \Sigma') \in\Phi, \forall x ,
    \bigl|a(x,\phi)-a(x,\phi')\bigr|
    \le L\,\bigl\|\phi-\phi'\bigr\|_2.
  \end{equation}

  First, notice that using \Cref{lem:log-lip} and \Cref{A.4}, we have
  \begin{equation}
      \left| a(x, \phi) - a(x, \phi') \right| = |\log(f_x(\phi)) - \log(f_x(\phi'))| \leq \frac{1}{c} | f_x(\phi) - f_x(\phi') |, 
  \end{equation}
  where we used the notation $f_x(\phi) := q_\phi(s' \mid s, a) = \E_{\xi \sim P_\phi}[p_\xi(s' \mid s, a)]$. Hence, it is sufficient to prove that $|f_x(\phi) - f_x(\phi') | \leq \Tilde L \norm{\phi - \phi'}$ for every $x$ for some constant $\Tilde L > 0$.
  
  We start by treating the case where $\Sigma$ and $\Sigma'$ are non-singular.

  \paragraph{Case 1: non-singular covariance matrices.} 
    In the case where $\Sigma$ is non-singular, \begin{equation}
        f_x(\phi) = \int_{\xi} h_x(\xi) \mathcal N(\xi; \mu, \Sigma) \rm d \xi,
    \end{equation} where $h_x(\xi) := p_\xi(s' \mid s, a)$ and $\mathcal N (\xi; \mu, \Sigma) := (2\pi)^{-d/2}\det(\Sigma)^{-\tfrac12} \exp\Bigl( - \tfrac12(\xi - \mu)^\top \Sigma^{-1} (\xi - \mu) \Bigr)$.
    Since $\mu \mapsto \mu^\top$ and $\Sigma \mapsto \Sigma^{-1}$ are continuously differentiable respectively on $\R^d$ and $\mathrm{GL}_d(\R)$, then the function $\phi \mapsto \mathcal{N}(\xi; \mu, \Sigma)$ is $C^1$ as long as $\Sigma \succ 0$ with 
    \begin{equation}
        \boxed{\nabla_\mu \mathcal{N}(\xi; \mu, \Sigma) = \Sigma^{-1} (\xi - \mu) \, \mathcal{N}  (\xi; \mu, \Sigma).}
    \end{equation}
    and using the matrix‐calculus identities
    \begin{equation}
        \di \log \det \Sigma = \tr(\Sigma^{-1} \di \Sigma), \qquad \di(\Sigma^{-1}) = - \Sigma^{-1}(\di \Sigma) \Sigma^{-1},
    \end{equation}
    we compute
    \begin{align}
    \di \log \mathcal N
    &= \di \Bigl[-\tfrac12 \log \det \Sigma -\tfrac12 (\xi - \mu)^\top \Sigma^{-1} (\xi-\mu) \Bigr]\\
    &= - \tfrac12 \tr(\Sigma^{-1} \di \Sigma) - \tfrac12 (\xi - \mu)^\top \di (\Sigma^{-1})(\xi - \mu)\\
    &= - \tfrac12 \tr( \Sigma^{-1} \di \Sigma) + \tfrac12 (\xi - \mu)^\top \bigl[ \Sigma^{-1}(\di \Sigma ) \Sigma^{-1} \bigr]
           (\xi-\mu).
    \end{align}
    Since $\di \mathcal N = \mathcal N \di \log \mathcal N$, we get
    \begin{equation}
         \di \mathcal N = \tfrac12 \mathcal N \Bigl[( \xi -\mu )^\top \Sigma^{-1} (\di \Sigma) \Sigma^{-1} (\xi - \mu) - \tr(\Sigma^{-1} \di \Sigma) \Bigr].
    \end{equation}
    Rewriting in Frobenius inner product form,
    \begin{equation}
        \di \mathcal N = \tr \Bigl[\Bigl(\tfrac12 \mathcal N [\Sigma^{-1} (\xi - \mu) (\xi - \mu)^\top \Sigma^{-1} - \Sigma^{-1}] \Bigr)^\top \di \Sigma \Bigr].
    \end{equation}

    Thus the gradient is
    \begin{equation}
        \boxed{ \nabla_\Sigma \mathcal N(\xi; \mu, \Sigma) = \frac12 \mathcal N(\xi;\mu,\Sigma) \Bigl[ \Sigma^{-1} (\xi - \mu)(\xi - \mu)^\top \Sigma^{-1} - \Sigma^{-1} \Bigr].}
    \end{equation}
    
    On each compact subset $K$ of $\Phi \, \cap \, \ens{(\xi, \Sigma): \Sigma \succ 0}$, we have by the sub-multiplicativity of the norm
    \begin{equation}
        \norm{h_x(\xi) \nabla_\mu \mathcal N(\xi; \mu, \Sigma)}_2 \leq M \norm{\Sigma^{-1}}_2 \norm{\xi - \mu}_2 \mathcal N (\xi; \mu, \Sigma),
    \end{equation}
    ans since the function $\phi \mapsto \norm{\Sigma^{-1}}_2 \norm{\xi - \mu}_2 \mathcal N (\xi; \mu, \Sigma)$ is continuous on $K$, it attains its maximum in some point of $K$, hence, there exists some $\mu_0$ and $\Sigma_0 \succ 0$ such that for all $\phi \in K$,
    \begin{equation}
        \norm{h_x(\xi) \nabla_\mu \mathcal N(\xi; \mu, \Sigma)} \leq M \norm{\Sigma_0^{-1}} \norm{\xi - \mu_0} \mathcal N (\xi; \mu_0, \Sigma_0), 
    \end{equation}
    where the right term is integrable in $\xi$ since $\E_{X\sim \mathcal N (\xi; \mu_0, \Sigma_0)} [\norm{X - \mu_0}] < \infty$.
    Furthermore,
    \begin{equation}
        \norm{h_x(\xi) \nabla_\Sigma \mathcal N(\xi; \mu, \Sigma)}_F \leq \frac{1}{2} M \mathcal{N}(\xi; \mu, \Sigma) \left( \norm{\Sigma^{-1} (\xi - \mu) (\xi - \mu)^\top \Sigma^{-1}}_F + \norm{\Sigma^{-1}}_F \right),
    \end{equation}
    and $\norm{\Sigma^{-1} (\xi - \mu) (\xi - \mu)^\top \Sigma^{-1}}_F \leq \norm{\Sigma^{-1}}_F \norm{(\xi - \mu) (\xi - \mu)^\top}_F \norm{\Sigma^{-1}}_F$. The middle factor can be rewritten as follows
    \begin{align}
        \norm{(\xi - \mu) (\xi - \mu)^\top}_F &= \tr\left[(\xi - \mu) (\xi - \mu)^\top (\xi - \mu) (\xi - \mu)^\top \right]\\
        &= \tr\left[(\xi - \mu)^\top  (\xi - \mu) (\xi - \mu)^\top (\xi - \mu) \right] \\
        &= \norm{\xi - \mu}_2^2,
    \end{align}
    which yields
    \begin{equation}
        \norm{h_x(\xi) \nabla_\Sigma \mathcal N(\xi; \mu, \Sigma)}_F \leq \frac{1}{2} M \mathcal{N}(\xi; \mu, \Sigma) \left( \norm{\Sigma^{-1}}_F^2 \norm{\xi - \mu}_2^2 + \norm{\Sigma^{-1}}_F\right).
    \end{equation}
    Again, the function $\phi \mapsto \left( \norm{\Sigma^{-1}}_F^2 \norm{\xi - \mu}_2^2 + \norm{\Sigma^{-1}}_F \right) \mathcal N (\xi; \mu, \Sigma)$ is continuous on $K$, it attains its maximum in some point of $K$, hence, there exists some $\mu_1$ and $\Sigma_1 \succ 0$ such that for all $\phi \in K$,
    \begin{equation}
        \norm{h_x(\xi) \nabla_\Sigma \mathcal N(\xi; \mu, \Sigma)}_F \leq \frac{1}{2} M \left( \norm{\Sigma_1^{-1}}_F^2 \norm{\xi - \mu_1}_2^2 + \norm{\Sigma_1^{-1}}_F \right) \mathcal N (\xi; \mu_1, \Sigma_1),
    \end{equation}
    where the right term in integrable in $\xi$ since the Gaussian distribution has finite second order moment.

    Using \emph{Leibniz integral rule}, the function $\phi \mapsto f_x(\phi)$ is $C^1$ and we may interchange differentiation and integration to get
    \begin{align}
        \nabla_\mu f_x(\phi) &= \int_\xi h_x(\xi) \nabla_\mu \mathcal N(\xi; \mu, \Sigma) \di \xi \\
        &= \int_\xi h_x(\xi) \Sigma^{-1} (\xi - \mu) \mathcal N(\xi; \mu, \Sigma) \di \xi \\
        &=\int_\xi h_x(\xi)  \left[ - \nabla_\xi \mathcal{N}(\xi; \mu, \Sigma)\right] \di \xi \\
        &= \int_\xi \nabla_\xi h_x(\xi) \, \mathcal{N}(\xi; \mu, \Sigma) \di \xi     \label{eq:ipp} \\
        &= \E_{\xi \sim \mathcal{N}(\mu, \Sigma)} \left[ \nabla_\xi h_x(\xi) \right],
    \end{align}
    where \Cref{eq:ipp} follows from an \emph{integration by part}.\footnote{The first term of the integration by part vanishes since $| h_x(\xi) \, \mathcal{N}(\xi; \mu, \Sigma)| \leq M \mathcal{N}(\xi; \mu, \Sigma) \xrightarrow[\norm{\xi} \to \infty]{} 0$.}
    Furthermore,
    \begin{align}
        \nabla_\Sigma f_x(\phi) &= \int_\xi h_x(\xi) \nabla_\Sigma \mathcal N(\xi; \mu, \Sigma) \di \xi \\
        &= \int_\xi h_x(\xi) \frac12  \Bigl[ \Sigma^{-1} (\xi - \mu)(\xi - \mu)^\top \Sigma^{-1} - \Sigma^{-1} \Bigr] \mathcal N(\xi;\mu,\Sigma) \di \xi \\
        &= \frac{1}{2} \Sigma^{-1} \E_{\xi \sim \mathcal N(\mu, \Sigma)}\Bigl[ h_x(\xi) \left[(\xi - \mu) (\xi - \mu)^\top - \Sigma \right] \Bigr] \Sigma^{-1} \\
        &= \frac{1}{2} \Sigma^{-1/2} \E_{\xi \sim \mathcal N(\mu, \Sigma)}\Bigl[ h_x(\xi) \left[\Sigma^{-1/2}(\xi - \mu) (\Sigma^{-1/2}(\xi - \mu))^\top - \bm{I}_d \right] \Bigr] \Sigma^{-1/2} \\
        &= \frac{1}{2} \Sigma^{-1/2} \E_{\xi \sim \mathcal N(\mu, \Sigma)}\left[g(\bm z) (\bm z \bm z^\top - \mathbf{I}_d)\right]   \Sigma^{-1/2}, \label{eq:54}
    \end{align}
    where $\Sigma^{-1/2}$ is the unique positive definite square root of $\Sigma^{-1}$, $\bm z := \Sigma^{-1/2}(\xi - \mu)$ and $g( \bm z) := h_x(\xi) = h_x(\Sigma^{1/2} \bm z + \mu)$. Using the \emph{Iterated Stein formula} \citep{stein2020, stein} we have
    \begin{align}
        \E_{\xi \sim \mathcal N(\mu, \Sigma)}\left[g(\bm z) (\bm z \bm z^\top - \mathbf{I}_d)\right] &= \E_{\xi \sim \mathcal N(\mu, \Sigma)}\left[\nabla_{\bold{z}}^2 g(\bm z)\right]\\ & = \E_{\xi \sim \mathcal N(\mu, \Sigma)}\left[\Sigma \, \nabla^2_\xi h_x(\xi)\right] \\ &= \Sigma \,  \E_{\xi \sim \mathcal N(\mu, \Sigma)}\left[\nabla^2_\xi h_x(\xi)\right].
    \end{align}
    Combining this equation with \Cref{eq:54} yields
    \begin{align}
        \nabla_\Sigma f_x(\phi) = \frac{1}{2} \E_{\xi \sim \mathcal N(\mu, \Sigma)}\left[ \nabla^2_\xi h_x(\xi) \right].
    \end{align}
    Since $f_x$ is $C^1$ when $\Sigma \succ 0$, for any two points $\phi, \phi' \in \Phi$ such that $\Sigma \succ 0$ and $\Sigma' \succ 0$, there is $\Tilde \phi$ on the segment joining them (and thus $\Tilde \Sigma \succ 0$) \footnote{Indeed, there exists $t \in [0,1]$ such that $\Tilde \Sigma = t \Sigma + (1-t) \Sigma'$ where $\Sigma \succ 0$ and $\Sigma' \succ 0$, thus for any $z \in \R^d, z^\top \Tilde \Sigma z = t z^\top \Sigma z + (1 - t) z^\top \Sigma' z > 0$.} so that by the \emph{mean‐value theorem}
    \begin{equation}
        f_x(\phi)-f_x(\phi') = \bigl \langle \nabla_\phi f_x(\tilde\phi),\,\phi-\phi' \bigr\rangle.
    \end{equation}

    In particular
    \begin{equation}
        |f_x(\phi)-f_x(\phi')| \le \|\nabla_\phi f_x(\tilde\phi)\|\,\|\phi-\phi'\|.
    \end{equation}

    By assumption (ii), $\|\nabla_\xi h_x\|\le G_1$ and $\|\nabla^2_\xi h_x\|\le G_2$. Hence
    \begin{equation}
          \|\nabla_\mu f_x(\phi)\| = \bigl\| \E[ \nabla_\xi h_x(\xi) ]\bigr\| \leq G_1, \quad \| \nabla_\Sigma f_x(\phi) \| = \tfrac12 \bigl\| \E[ \nabla^2_\xi h_x(\xi) ] \bigr\| \leq \tfrac{G_2}{2}.
    \end{equation}
    Assembling the two blocks,
    \begin{equation}
        |f_x(\phi) - f_x(\phi')| \leq \bigl(G_1 + \tfrac{G_2}{2}\bigr) \| \phi-\phi' \|.
    \end{equation}
    Therefore $f_x$ is Lipschitz in $\phi$, with constant $L'=G_1+G_2/2$, and by Lemma~\ref{lem:log-lip} so is  $a(x,\phi)=\log f_x(\phi)$ with constant $L = \dfrac{G_1 + G_2 / 2}{c}$.

    \paragraph{General case.} For the case where we no longer suppose that $\Sigma$ and $\Sigma'$ are non-singular, we use the density of the set of invertible matrices in $M_d(\R)$. More precisely, there exists two sequences of non-singular matrices $\ens{\Sigma_N}_N$ and $\ens{\Sigma'_N}_N$ such that $\Sigma_N \to \Sigma$ and $\Sigma'_N \to \Sigma'$ when $N \to \infty$. We denote $\phi_N := (\mu, \Sigma_N)$ and $\phi'_N := (\mu, \Sigma_N^\prime)$. The previous result yields 
    \begin{equation}
        \forall N \geq 0, \forall x, \, |a(x, \phi_N) - a(x, \phi'_N)| \leq L \norm{\phi_N - \phi'_N},
    \end{equation}
    and thus, when $N \to \infty$ we get
    \begin{equation}
        \forall x, \, |a(x, \phi) - a(x, \phi')| \leq L \norm{\phi - \phi'},
    \end{equation}
    where we used the continuity of the function $\phi \mapsto a(x, \phi)$ on $\Phi$ (\Cref{lemma:continu}). This concludes the proof.
\end{proof}

Notice that even if in many robotic systems have strongly non-smooth dynamics in $(s, a)$ due to hard contacts and friction. However, this non-smoothness concerns the map $(s, a) \mapsto p_\xi (s' \mid s, a)$ (for example, if the state encodes the position and velocity of a robot arm, near a hard contact the probability of next-step positive velocity can change discontinuously). However, the map $\xi \mapsto p_\xi ( s' \mid s, a)$ typically remains smooth with respect to the physical parameters. Intuitively, if we slightly perturb masses, friction coefficients, or gains, we expect the transition probabilities to change only slightly, even though the contact dynamics in $(s, a)$ are themselves non-smooth.

\subsection{Weak Consistency under Partial Coverage}\label{apx:partial-coverage}

\begin{restate-thm}{\ref{thm:coverage-relaxed}}
    Under Assumptions ~\ref{A.2}, \ref{A.3} and \ref{A.4}, the following holds, Any measurable maximizer 
            \(\displaystyle \widehat{\phi}_N\in\argmax_{\phi\in\Phi}L_N(\phi)\) 
            satisfies $\mathrm{dist} (\widehat{\phi}_N, \mathcal{Q}_\mu^\star) \xrightarrow[N\xrightarrow{} \infty]{P} 0$ \footnote{where $\mathrm{dist}$ is the distance to a set defined by $\mathrm{dist}(\phi, \mathcal{Q}) := \inf_{\psi \in \mathcal{Q}} \norm{\phi - \psi}$.}.
\end{restate-thm}

\begin{proof}[Proof of \Cref{thm:coverage-relaxed}]
    As in \Cref{thm:consistency}, the uniform law of large numbers holds:
\begin{equation}
\label{eq:ULLN}
\sup_{\phi\in\Phi} \big| L_N(\phi)-L(\phi) \big| \xrightarrow{ P } 0.
\end{equation}
\Cref{lem:Q-nonempty-compact} proves that $\mathcal{Q}^\star_\mu$ is nonempty and compact.

Fix $\varepsilon>0$ and define the separation (margin) outside the $\varepsilon$-neighborhood of $\mathcal{Q}^\star_\mu$:
\[ \eta(\varepsilon)\ := \inf\Big\{ L(\phi^\star)-L(\phi) : \phi^\star \in \mathcal{Q}^\star_\mu,\ \mathrm{dist}(\phi, \mathcal{Q}^\star_\mu)\ge \varepsilon \Big\}.
\]
Because $L$ is continuous and $\{\phi\in\Phi:\mathrm{dist}(\phi,\mathcal{Q}^\star_\mu)\ge \varepsilon\}$ is compact, we have $\eta(\varepsilon)>0$.

By \eqref{eq:ULLN}, there exists a sequence of events $\mathcal E_N$ with $P(\mathcal E_N)\to1$ such that on $\mathcal E_N$,
\[
\sup_{\phi\in\Phi}\big|L_N(\phi)-L(\phi)\big|\ \le\ \tfrac13\,\eta(\varepsilon).
\]
On $\mathcal E_N$, for any $\phi$ with $\mathrm{dist}(\phi,\mathcal{Q}^\star_\mu)\ge \varepsilon$ and any $\phi^\star \in \mathcal{Q}^\star_\mu$,
\[
L_N(\phi)\ \le\ L(\phi)+\tfrac13\eta(\varepsilon)\ \le\ L(\phi^\star)-\eta(\varepsilon)+\tfrac13\eta(\varepsilon)
\ =\ L(\phi^\star)-\tfrac23\eta(\varepsilon)
\ < \sup_{\psi\in\mathcal{Q}^\star_\mu} L_N(\psi),
\]
where the last inequality uses $L_N(\psi)\ge L(\psi)-\tfrac13\eta(\varepsilon)=L(\phi^\star)-\tfrac13\eta(\varepsilon)$ for any $\psi \in \mathcal{Q}^\star_\mu$.
Therefore, no maximizer of $L_N$ can lie outside the $\varepsilon$-neighborhood of $\mathcal{Q}^\star_\mu$ on $\mathcal E_N$.
Equivalently,
\[
\mathrm{dist}\big(\widehat{\phi}_N,\mathcal{Q}^\star_\mu\big)\ <\ \varepsilon\quad\text{on }\mathcal E_N.
\]
Since $P(\mathcal E_N)\to1$ and $\varepsilon>0$ is arbitrary, we conclude
$\mathrm{dist}(\widehat{\phi}_N,\mathcal{Q}^\star_\mu)\xrightarrow{ P }0$.
\end{proof}

\begin{restate-lem}{\ref{lem:Q-nonempty-compact}}
    Under Assumptions \ref{A.2}, \ref{A.3} and \ref{A.4} The identified set $\mathcal{Q}_\mu^\star$ is non-empty and compact and and the correspondence $\mu \mapsto \mathcal{Q}^\star_\mu$ is upper hemicontinuous\footnote{A set-valued map $F$ is upper hemicontinuous at $x_0$ if, whenever $x_n\!\to x_0$ and $y_n\in F(x_n)$ with $y_n\to y$, then $y\in F(x_0)$. Equivalently: for every open $U$ with $F(x_0)\subseteq U$, there exists a neighborhood $V$ of $x_0$ such that $F(x)\subseteq U$ for all $x\in V$.} with respect to total variation.
\end{restate-lem}

\begin{proof}[Proof of Lemma~\ref{lem:Q-nonempty-compact}]
Write 
\[
L(\phi, \mu)=\E_{(S,A)\sim\mu}\,\E_{S'\sim p_{\xi^\star}(\cdot\mid S,A)}[\,a((S,A,S'),\phi)\,]
=\int_{\mathcal S\times\mathcal A} f_\phi(s,a)\,\mu(ds,da), \]
where
\[
f_\phi(s,a):=\E_{S'\mid s,a}[a((s,a,S'),\phi)].
\]

\paragraph{Step 1: Finite-valued and continuity in $\phi$.}
We have 
\[
\sup_{\phi\in\Phi}\,|a(x,\phi)|\ \le\ \widetilde M\qquad\text{for all }x=(s,a,s').
\]
Therefore $|f_\phi(s,a)|\le \widetilde M$ for all $(s,a)$ and $\phi$, and $L(\phi,\mu)\in\R$. Moreover, \Cref{lemma:continu} gives continuity of $\phi\mapsto a(x,\phi)$ for each fixed $x$. By dominated convergence with the uniform bound $\widetilde M$, we obtain continuity (hence upper semicontinuity) of $\phi \mapsto L(\phi, \mu)$ on $\Phi$.

\paragraph{Step 2: Uniform TV–continuity in $\mu$.}
Let $\mu_n\to\mu$ in total variation. Then, for any $\phi\in\Phi$,
\begin{align}
   |L(\phi,\mu_n)-L(\phi,\mu)| &=\Big|\int f_\phi(s,a)\,(\mu_n-\mu)(ds\,da)\Big| \\
&\ \le\ \int |f_\phi(s,a)|\,|(\mu_n-\mu)|(ds\,da) \\
&\ \le\ \widetilde M\,\|\mu_n-\mu\|_{\mathrm{TV}}. 
\end{align}
Taking the supremum over $\phi\in\Phi$ yields
\begin{equation}\label{eq:uniform-tv}
\sup_{\phi\in\Phi}|L(\phi,\mu_n)-L(\phi;\mu)|\ \le\ \widetilde M\,\|\mu_n-\mu\|_{\mathrm{TV}}\ \xrightarrow[n\to\infty]{}\ 0.
\end{equation}

\paragraph{Step 3: Joint continuity of $L$.}
Let $(\phi_n,\mu_n)\to(\phi,\mu)$ with $\phi_n\to\phi$ in $\Phi$ and $\mu_n\to\mu$ in TV. Then
    \[ 
    | L(\phi_n,\mu_n) - L(\phi,\mu) | \le | L(\phi_n,\mu_n) -L(\phi_n,\mu)| + |L(\phi_n,\mu) - L(\phi,\mu)|.
    \]
By uniform TV–continuity in $\mu$ (from $|a(x,\phi)| \le \widetilde M$),
    \[ \sup_{\psi\in\Phi}|L(\psi,\mu_n)-L(\psi,\mu)| \le \tilde M \| \mu_n - \mu \|_{\mathrm{TV}} \xrightarrow[n\to\infty]{}0, 
    \]
hence $|L(\phi_n,\mu_n)-L(\phi_n,\mu)|\to0$. By continuity in $\phi$ at fixed $\mu$ (dominated convergence with the same bound),
$|L(\phi_n,\mu)-L(\phi,\mu)|\to0$. Therefore $L(\phi_n,\mu_n)\to L(\phi,\mu)$, i.e., $(\phi,\mu)\mapsto L(\phi,\mu)$ is jointly continuous.

Hence, by \emph{Berge’s Maximum Theorem} \citep{berge1963topological}, for each $\mu$ the argmax set $\mathcal Q^\star_\mu=\arg\max_{\phi\in\Phi}L(\phi, \mu)$ is nonempty and compact, and the correspondence $\mu \mapsto \mathcal Q^\star_\mu$ is upper hemicontinuous (in total variation).
\end{proof}

\subsection{Misspecification and Representative MDPs}\label{apx:misspecification}
In this subsection, we study the case where we no longer suppose that the true dynamics $\M^\star$ belongs to the simulator class $\U$.
Given a Markov kernel $P : \mathcal S \times \mathcal A \to \Delta(\mathcal S)$,
we write $\mathcal M(P)$ for the MDP
$(\mathcal S,\mathcal A,P,R,H,s_1)$
and we denote the value of a policy $\pi$ at the initial state $s_1$ by
\[
    V_{P,1}^{\pi}(s_1) := V_{\mathcal M(P),1}^{\pi}(s_1).
\]
We use $\mathrm{TV}(p,q) := \frac12 \sum_{s' \in \mathcal S} |p(s')-q(s')|$ for the
total variation distance between distributions $p,q$ over~$\mathcal S$.
\begin{lem}[Value stability under kernel perturbations]\label{lem:value-stability}
Let $P,Q : \mathcal{S} \times \mathcal{A} \to \Delta(\mathcal{S})$ be two transition kernels defined on the same state--action space, with a common reward function $R : \mathcal{S} \times \mathcal{A} \to [0,1]$ and horizon $H$. Define
\[
    \Delta(P,Q) := \sup_{(s,a)\in\mathcal{S}\times\mathcal{A}}
    \mathrm{TV}\big(P(\cdot\mid s,a),\,Q(\cdot\mid s,a)\big),
\]
where $\mathrm{TV}(p,q) := \tfrac12\sum_{s' \in \mathcal{S}} |p(s') - q(s')|$ denotes the total variation distance
between two probability distributions $p,q$ on $\mathcal{S}$. Then, for any policy $\pi$ and initial state $s_1$,
\[
    \bigl| V_{P,1}^{\pi}(s_1) - V_{Q,1}^{\pi}(s_1) \bigr|
    \le H^2 \Delta(P,Q).
\]
\end{lem}
\begin{proof}
For $h \in \{1,\dots,H\}$, let $d_h^{P}$ and $d_h^{Q}$ denote the distributions over state--action pairs
$(s_h,a_h)$ at step $h$ when running policy $\pi$ in the MDPs $\mathcal{M}(P)$ and $\mathcal{M}(Q)$ respectively,
both initialized from the same state $s_1$. In particular, $d_1^{P} = d_1^{Q}$.
We first control the evolution of the occupancy measures. By definition of the dynamics,
\[
    d_{h+1}^{P}(s',a')
    = \sum_{s,a} d_h^{P}(s,a) \pi_h(a'\mid \mathrm{traj}_h) P(s'\mid s,a),
\]
and analogously for $Q$. Hence
\begin{align*}
    \|d_{h+1}^{P} - d_{h+1}^{Q}\|_1 
    &= \sum_{s', a'} \left| \sum_{s, a} d_h^{P}(s,a) \pi_h(a'\mid \mathrm{traj}_h) P(s'\mid s,a) - d_h^{Q}(s,a) \pi_h(a'\mid \mathrm{traj}_h) Q(s'\mid s,a) \right|  \\
    &\leq \sum_{s', a'} \sum_{s, a} \bigl|d_h^{P}(s,a) - d_h^{Q}(s,a)\bigr| \pi_h(a'\mid \mathrm{traj}_h) P(s'\mid s,a) \\
    & \qquad \qquad \qquad + \sum_{s', a'} \sum_{s, a} d_h^{Q}(s,a)  \pi_h(a'\mid \mathrm{traj}_h)  \bigl|P(s'\mid s,a) - Q(s'\mid s,a)\bigr| \\
    &\le \sum_{s,a} \bigl|d_h^{P}(s,a) - d_h^{Q}(s,a)\bigr|
      + \sup_{(s,a)} \sum_{s'} \bigl|P(s'\mid s,a) - Q(s'\mid s,a)\bigr| \\
    &= \|d_h^{P} - d_h^{Q}\|_1
      + 2 \sup_{(s,a)} \mathrm{TV}\big(P(\cdot\mid s,a),Q(\cdot\mid s,a)\big) \\
    &\le \|d_h^{P} - d_h^{Q}\|_1 + 2 \Delta(P,Q).
\end{align*}
Since $\|d_1^{P} - d_1^{Q}\|_1 = 0$, an induction on $h$ yields
\[
    \|d_h^{P} - d_h^{Q}\|_1 \;\le\; 2(h-1)\,\Delta(P,Q)
    \qquad \forall\, h=1,\dots,H.
\]
Next, write the value of policy $\pi$ under kernel $P$ as
\[
    V_{P,1}^{\pi}(s_1)
    = \sum_{h=1}^H \mathbb{E}_{(s_h,a_h)\sim d_h^{P}} \bigl[R(s_h,a_h)\bigr],
\]
and similarly $V_{Q,1}^{\pi}(s_1)$ with $d_h^{Q}$. Since $R(s,a)\in[0,1]$,
\[
    \bigl|\mathbb{E}_{d_h^{P}}[R] - \mathbb{E}_{d_h^{Q}}[R]\bigr|
    \le \|d_h^{P} - d_h^{Q}\|_1.
\]
Therefore,
\begin{align*}
    \bigl|V_{P,1}^{\pi}(s_1) - V_{Q,1}^{\pi}(s_1)\bigr|
    &\le \sum_{h=1}^H \bigl|\mathbb{E}_{d_h^{P}}[R] - \mathbb{E}_{d_h^{Q}}[R]\bigr| \\
    &\le \sum_{h=1}^H \|d_h^{P} - d_h^{Q}\|_1 \\
    &\le \sum_{h=1}^H 2(h-1)\,\Delta(P,Q)
     \le H^2 \Delta(P,Q),
\end{align*}
where the last inequality uses $\sum_{h=1}^H (h-1) = H(H-1)/2 \le H^2/2$. This concludes the proof.
\end{proof}
\begin{thm}
Let $q_{\widehat \phi_N}$ be the learned mixture kernel from offline data, $\M_{\widehat \phi_N} := \M(q_{\widehat \phi_N})$ be the training MDP ODR uses, and $\pi_N$ be the learned policy using any RL algorithm in $\M_{\widehat \phi_N}$, then we have:
    \[
    \mathrm{Gap}_{\mathcal M^\star}(\pi_N) \le
    \mathrm{Gap}_{\mathcal M_{\widehat\phi_N}}(\pi_N) + 
    4 H^2 \Delta\bigl(P^\star,q_{\widehat\phi_N}\bigr).
    \]
\end{thm}
\begin{proof}
We have 
\begin{align*}
    &V_{\M^*, 1}^*(s_1) - V_{\M^*, 1}^{\pi_N}(s_1)\\ & = V_{\M^*, 1}^*(s_1) - V_{\M_{\widehat \phi_N}, 1}^*(s_1) + V_{\M_{\widehat \phi_N}, 1}^*(s_1) - V_{\M_{\widehat \phi_N}, 1}^{\pi_N}(s_1) + V_{\M_{\widehat \phi_N}, 1}^{\pi_N}(s_1)  - V_{\M^*, 1}^{\pi_N}(s_1),
\end{align*}
By maximality of $V_{\M_{\widehat \phi_N}, 1}^*(s_1)$ we have:
    \begin{align}
        V_{\M^*, 1}^*(s_1) - V_{\M_{\widehat \phi_N}, 1}^*(s_1) &\leq V_{\M^*, 1}^*(s_1) - V_{\M_{\widehat \phi_N}, 1}^{\pi^*_{\M*}}(s_1) \\
        &= V_{\M^*, 1}^{\pi^*_{\M*}}(s_1) - V_{\M_{\widehat \phi_N}, 1}^{\pi^*_{\M*}}(s_1) \\
        &\leq H^2 \Delta(P^*, q_{\widehat \phi_N}),
    \end{align}
where the last inequality follows from \Cref{lem:value-stability}. Using the same lemma we have:
    \[V_{\M_{\widehat \phi_N}, 1}^{\pi_N}(s_1)  - V_{\M^*, 1}^{\pi_N}(s_1) \leq H^2 \Delta(P^*, q_{\widehat \phi_N}). \]
Hence,
    \[V_{\M^*, 1}^*(s_1) - V_{\M^*, 1}^{\pi_N}(s_1) \leq V_{\M_{\widehat \phi_N}, 1}^*(s_1) - V_{\M_{\widehat \phi_N}, 1}^{\pi_N}(s_1) + 2 H^2 \Delta(P^*, q_{\widehat \phi_N}). \]
\end{proof}
The first term is the suboptimality of $\pi_N$ in the \emph{learned} mixture MDP
(that ODR actually optimizes against), while the second term is a
\emph{closeness penalty} measuring how well the fitted mixture kernel
$q_{\widehat\phi_N}$ approximates the real dynamics $P^\star$ in total variation.
Under the well-specified and identifiable assumptions of our main results,
$q_{\widehat\phi_N}$ converges to $P^\star$, so the penalty term vanishes as
$N\to\infty$. Under misspecification, the penalty converges to the best approximation error achievable within the simulator family.

\end{document}